\documentclass[]{elsarticle}

\usepackage{lineno,hyperref}
\modulolinenumbers[5]

% REMOVE THE FOLLOWING TO GET THE "PREPRINT SUBMITTED TO ELSEVIER" LINE

\makeatletter
\def\ps@pprintTitle{%
    \let\@oddhead\@empty
    \let\@evenhead\@empty
    \let\@oddfoot\@empty
    \let\@evenfoot\@empty
    }
\makeatother
% % % % % % % % % % % % % % % % % % % % % % % % % % % % % % % % % % % % 

% \journal{Journal of \LaTeX\ Templates}

%%%%%%%%%%%%%%%%%%%%%%%
%% Elsevier bibliography styles
%%%%%%%%%%%%%%%%%%%%%%%
%% To change the style, put a % in front of the second line of the current style and
%% remove the % from the second line of the style you would like to use.
%%%%%%%%%%%%%%%%%%%%%%%

%% Numbered
% \bibliographystyle{model1-num-names}

%% Numbered without titles
%\bibliographystyle{model1a-num-names}

%% Harvard
% \bibliographystyle{model2-names.bst}\biboptions{authoryear}

%% Vancouver numbered
% \usepackage{numcompress}\bibliographystyle{model3-num-names}

%% Vancouver name/year
% \usepackage{numcompress}\bibliographystyle{model4-names}\biboptions{authoryear}

%% APA style
\bibliographystyle{model5-names}\biboptions{authoryear}

%% AMA style
% \usepackage{numcompress}\bibliographystyle{model6-num-names}

%% `Elsevier LaTeX' style
% \bibliographystyle{elsarticle-num}
%%%%%%%%%%%%%%%%%%%%%%%

% PACKAGES
\usepackage{algorithm}
\usepackage{algorithmic}
\usepackage{newfloat}
\usepackage{listings}
\lstset{%
	basicstyle={\footnotesize\ttfamily},% footnotesize acceptable for monospace
	numbers=left,numberstyle=\footnotesize,xleftmargin=2em,% show line numbers, remove this entire line if you don't want the numbers.
	aboveskip=0pt,belowskip=0pt,%
	showstringspaces=false,tabsize=2,breaklines=true}
\floatstyle{ruled}
\newfloat{listing}{tb}{lst}{}
\floatname{listing}{Listing}
\usepackage{latexsym,amssymb,amsmath,amsthm}
\usepackage{xcolor}
\usepackage{xspace}
\usepackage{url}
\usepackage{enumerate}
\usepackage{comment}
\usepackage{todonotes}
\usepackage{stackengine}
\usepackage{subcaption}
\usepackage{booktabs}

\usepackage[inline]{enumitem}
\newlist{romanenumerate*}{enumerate*}{1}
\setlist[romanenumerate*]{label=(\textit{\roman*})}
\newlist{romanenumerate}{enumerate}{1}
\setlist[romanenumerate]{label=(\textit{\roman*})}

\newtheorem{theorem}{Theorem}
\newtheorem{lemma}{Lemma}

\newtheorem{corollary}{Corollary}
\newtheorem{definition}{Definition}
\newtheorem{example}{Example}

% custom newcommands
\newcommand{\astfootnote}[1]{%
\let\oldthefootnote=\thefootnote%
\setcounter{footnote}{0}%
\renewcommand{\thefootnote}{\fnsymbol{footnote}}%
\footnote{#1}%
\let\thefootnote=\oldthefootnote%
}

\usepackage{tikz}
\usetikzlibrary{automata, positioning, arrows, fit}
\tikzset{->,    
    >=stealth,
    node distance=3cm,
    every state/.style={thick, fill=gray!10},
    align=center
}

\makeatletter 
\newcommand{\mylabel}[2]{#2\def\@currentlabel{#2}\label{#1}}
\makeatother

\newcommand{\blocks}{{\textsc{BlocksWorld}}\xspace}
\newcommand{\blocksnd}{{\textsc{BlocksWorldND}}\xspace}
\newcommand{\tireworld}{{\textsc{TriangleTireworld}}\xspace}

\newcommand{\elevator}{{\textsc{Elevator}}\xspace}

%% Tools

% only compilers
\newcommand{\planforpast}{{\ensuremath{\mathsf{Plan4Past}}}\xspace}
\newcommand{\planforpastshort}{{\ensuremath{\mathsf{P4P}}}\xspace}
\newcommand{\fondforlp}{{\ensuremath{\mathsf{FOND4LTLf}}}\xspace}
\newcommand{\fondforlpshort}{{\ensuremath{\mathsf{F4LP}}}\xspace}
\newcommand{\ltlfondtofond}{{\ensuremath{\mathsf{LTLFOND2FOND}}}\xspace}
\newcommand{\ltlfondtofondshort}{{\ensuremath{\mathsf{LF2F}}}\xspace}

% planners
\newcommand{\fastdownward}{{\ensuremath{\mathsf{FastDownward}}}\xspace}
\newcommand{\fastdownwardshort}{{\ensuremath{\mathsf{FD}}}\xspace}
\newcommand{\mynd}{{\ensuremath{\mathsf{MyND}}}\xspace}
\newcommand{\myndshort}{{\ensuremath{\mathsf{MyND}}}\xspace}
\newcommand{\prp}{{\ensuremath{\mathsf{PRP}}}\xspace}

% compilers + planners
% p4p
\newcommand{\planforpastfd}{{\ensuremath{\mathsf{\planforpast+\fastdownward}}}\xspace}
\newcommand{\planforpastfdshort}{{\ensuremath{\mathsf{\planforpastshort+\fastdownwardshort}}}\xspace}
\newcommand{\planforpastmynd}{{\ensuremath{\mathsf{\planforpast+\mynd}}}\xspace}
\newcommand{\planforpastmyndshort}{{\ensuremath{\mathsf{\planforpastshort+\myndshort}}}\xspace}
% f4lp

\newcommand{\fondforlpfdshort}{{\ensuremath{\mathsf{\fondforlpshort+\fastdownwardshort}}}\xspace}

\newcommand{\fondforlpmyndshort}{\ensuremath{\mathsf{{\fondforlpshort+\myndshort}}}\xspace}
% lf2f

\newcommand{\ltlfondtofondfdshort}{\ensuremath{\mathsf{{\ltlfondtofondshort+\fastdownwardshort}}}\xspace}

\newcommand{\ltlfondtofondmyndshort}{{\ensuremath{\mathsf{\ltlfondtofondshort+\myndshort}}}\xspace}

%%%%%%%%%%%%%%%%%% NEW COMMANDS %%%%%%%%%%%%%%%%%%%%%%

\newcommand{\start}{\mathsf{start}\xspace}

\newcommand{\policy}{\pi}

\newcommand{\val}{{\mathsf{val}}\xspace}
\newcommand{\pl}{{\mathsf{pl}}\xspace}
\newcommand{\prog}{{\mathsf{prog}}\xspace}
\newcommand{\ev}{{\mathsf{eval}}\xspace}

\newcommand{\quoted}[1]{``{#1}"}
\newcommand{\last}{\mathsf{last}\xspace}

\newcommand{\length}{\mathsf{length}\xspace}
\newcommand{\plus}[1]{\mathsf{{#1}^+}}
\newcommand{\prop}{{\mathsf{prop}}\xspace}
\newcommand{\sub}{{\mathsf{sub}}\xspace}
\newcommand{\Sphi}{{\mathsf{\Sigma_\varphi}}\xspace}
\newcommand{\Sp}{{\mathsf{\Sigma_\P}}\xspace}
\newcommand{\sigmaPhi}[1]{{\sigma_{#1}^{\varphi}}\xspace}
\newcommand{\sigmaY}[1]{{\sigma_{#1}}\xspace}
\newcommand{\s}{{s}\xspace}
\newcommand{\sigmaPhiprime}{\sigma^{\varphi\prime}\xspace}
\newcommand{\pred}[1]{\texttt{#1}\xspace}

\newcommand{\trace}{{\tau}\xspace}
\newcommand{\Plus}{{[\varphi]}\xspace}
\newcommand{\tracePlus}{{\tau^\Plus}\xspace}
\newcommand{\traceInd}{{\tau_{n{-}1}{\cdot}\s_n}\xspace}

\newcommand{\Pre}{{\mathit{Pre}}\xspace}
\newcommand{\Eff}{{\mathit{Eff}}\xspace}
\newcommand{\Val}{{\mathsf{Val}_\varphi}\xspace}

\newcommand{\PA}{{\mathsf{pa}}\xspace}

\newcommand{\pnf}{{\mathsf{pnf}}\xspace}
\newcommand{\PNF}{{\sc pnf}\xspace}

% \newcommand{\curr}{{\mathsf{curr}}\xspace}

%%%%%%%%%%%%%%%%%%%%%%%%%% General

\newcommand{\myi}{(\emph{i})\xspace}
\newcommand{\myii}{(\emph{ii})\xspace}
\newcommand{\myiii}{(\emph{iii})\xspace}

%% general math
\newcommand{\A}{\mathcal{A}} 
 \newcommand{\D}{\mathcal{D}}
 \newcommand{\F}{\mathcal{F}}

 \renewcommand{\P}{\mathcal{P}}
 \newcommand{\R}{\mathcal{R}}

\newcommand{\Rp}{{\R}_{\textrm{past}}}

\newcommand{\defeq}{=}

%%PLTL
% \newcommand{\Yesterday}{\ominus}
% % \newcommand{\Yesterday}{
% % \ensurestackMath{%
% %   \stackengine{.8pt}{\Next}{\scalebox{.9}[1]{$-$}}{O}{c}{F}{F}{L}}}
% \newcommand{\Wyesterday}{\overline{\Yesterday}}
% %\newcommand{\Yesterday}{\raisebox{-0.27ex}{$\ominus$}}
% \newcommand{\Historically}{{\boxminus}}
% % \newcommand{\Once}{{\diamondminus}}
% \newcommand{\Once}{
% \ensurestackMath{%
%   \stackengine{.8pt}{\Diamond}{\scalebox{.9}[1]{$-$}}{O}{c}{F}{F}{L}}}

% \newcommand{\since}{\mathop{\S}}
% \newcommand{\Since}{\mathop{\S}}
% \newcommand{\PastRelease}{\mathop{\R}^{-}}

% questa è una prova
\newcommand{\Yesterday}{{\mathsf{Y}}\xspace}
\newcommand{\Wyesterday}{{\mathsf{WY}}\xspace}
\newcommand{\Historically}{{\mathsf{H}}\xspace}
\newcommand{\Once}{{\mathsf{O}}\xspace}
\newcommand{\Since}{\,{\mathsf{S}}\,\xspace}

% LTLf
\newcommand{\Always}{{\mathsf{G}}\xspace}
\newcommand{\Eventually}{{\mathsf{F}}\xspace}
\newcommand{\Next}{{\mathsf{X}}\xspace}
\newcommand{\Wnext}{{\mathsf{WX}}\xspace}
\newcommand{\lUntil}{\,{\mathsf{U}}\,\xspace}
\newcommand{\Release}{\,{\mathsf{R}}\,\xspace}

\makeatletter
\newsavebox{\@brx}
\newcommand{\llangle}[1][]{\savebox{\@brx}{\(\m@th{#1\langle}\)}%
  \mathopen{\copy\@brx\kern-0.5\wd\@brx\usebox{\@brx}}}
\newcommand{\rrangle}[1][]{\savebox{\@brx}{\(\m@th{#1\rangle}\)}%
  \mathclose{\copy\@brx\kern-0.5\wd\@brx\usebox{\@brx}}}
\makeatother
\makeatletter
\newsavebox{\@sqbrx}
\newcommand{\llbracket}[1][]{\savebox{\@sqbrx}{\(\m@th{#1\lbrack}\)}%
  \mathopen{\copy\@sqbrx\kern-0.5\wd\@sqbrx\usebox{\@sqbrx}}}
\newcommand{\rrbracket}[1][]{\savebox{\@sqbrx}{\(\m@th{#1\rbrack}\)}%
  \mathclose{\copy\@sqbrx\kern-0.5\wd\@sqbrx\usebox{\@sqbrx}}}
\makeatother

%% LTL
%%\newcommand{\Next}{\raisebox{-0.27ex}{\LARGE$\circ$}}
% \newcommand{\Wnext}{\raisebox{-0.27ex}{\LARGE$\bullet$}}
% \newcommand{\Next}{\raisebox{-0.27ex}{\LARGE$\circ$}}
% \newcommand{\Next}{\mathop{\X}}
% \newcommand{\lUntil}{\mathop{\U}}
% \newcommand{\Release}{\mathop{\R}}
% \newcommand{\Wuntil}{\mathop{\W}}
\newcommand{\true}{\mathit{true}}

\newcommand{\false}{\mathit{false}}
\newcommand{\ttrue}{\textsf{true}}
\newcommand{\ffalse}{\textsf{false}}
\newcommand{\Last}{\textsf{end}}
\newcommand{\Started}{\mathit{start}}
\newcommand{\Ended}{\mathit{end}}

\newcommand{\DIAM}[1]{\langle #1 \rangle}
\newcommand{\BOXP}[1]{ \llbracket #1\rrbracket}
\newcommand{\DIAMP}[1]{\llangle #1 \rrangle}

%% Logics

\newcommand{\LTL}{{\sc ltl}\xspace}
\newcommand{\LTLf}{{\sc ltl}$_f$\xspace}

\newcommand{\LDLf}{{\sc ldl}$_f$\xspace}

\newcommand{\PLTLf}{{\sc ppltl}\xspace}
\newcommand{\PLTLfold}{{\sc pltl}$_f$\xspace}
\newcommand{\PLDLf}{{\sc ppldl}\xspace}

\newcommand{\FOL}{{\sc fol}\xspace}
\newcommand{\MSO}{{\sc mso}\xspace}

\newcommand{\AFA}{{\sc afa}\xspace}
\newcommand{\NFA}{{\sc nfa}\xspace}
\newcommand{\NFAs}{{\sc nfa}s\xspace}

\newcommand{\DFA}{{\sc dfa}\xspace}
\newcommand{\DFAs}{{\sc dfa}s\xspace}

\newcommand{\declare}{{\sc declare}\xspace}

\newcommand{\PDDL}{\textsf{PDDL}\xspace}
\newcommand{\FOND}{FOND\xspace}
% \newcommand{\DECLARE}{\textsf{Declare}\xspace}

%% Small caps titles

\newcommand{\PLTLftitle}{\textbf{\textsc{PPLTL}}\xspace}

%% Abbreviations

\newcommand{\Nat}{{\rm I\kern-.23em N}}
\newcommand{\Prop}{\P}

\newcommand{\tup}[1]{\langle #1 \rangle}

\newcommand{\tiff}{\; \text{ iff }\;}

%misc

% forward

%%% Local Variables:
%%% mode: latex
%%% TeX-master: "main"
%%% save-place: t
%%% End:

\pdfinfo{
/Title (Planning for Temporally Extended Goals in Pure-Past Linear Temporal Logic: A Polynomial Reduction to Standard Planning)
/Author (Giuseppe De Giacomo, Marco Favorito, Francesco Fuggitti)
/TemplateVersion (2022.1)
}

\begin{document}

\begin{frontmatter}

% \title{Elsevier \LaTeX\ template\tnoteref{mytitlenote}}
% \tnotetext[mytitlenote]{Fully documented templates are available in the elsarticle package on \href{http://www.ctan.org/tex-archive/macros/latex/contrib/elsarticle}{CTAN}.}
\title{Planning for Temporally Extended Goals in Pure-Past Linear Temporal Logic: A Polynomial Reduction to Standard Planning}

%% Group authors per affiliation:
% \author{Elsevier\fnref{myfootnote}}
% \address{Radarweg 29, Amsterdam}
% \fntext[myfootnote]{Since 1880.}

%% or include affiliations in footnotes:
\author[mymainaddress]{Giuseppe De Giacomo}
\cortext[mycorrespondingauthor]{Corresponding author}
\ead{degiacomo@diag.uniroma1.it}

\author[mymainaddress,mysecondaryaddress]{Marco Favorito\fnref{myfootnote}}
\ead{favorito@diag.uniroma1.it}
\fntext[myfootnote]{Opinions expressed are solely the author's own and do not express the views or opinions of his employer.}

\author[mymainaddress,mythirdaddress]{Francesco Fuggitti\corref{mycorrespondingauthor}}
\ead{fuggitti@diag.uniroma1.it}

\address[mymainaddress]{Sapienza Univesity, Rome, Italy}
\address[mysecondaryaddress]{Bank of Italy}
\address[mythirdaddress]{York University, Toronto, ON, Canada}

\begin{abstract}
We study temporally extended goals expressed in Pure-Past \LTL (\PLTLf). \PLTLf is particularly interesting for expressing goals since it allows to express sophisticated tasks as in the Formal Methods literature, while the worst-case computational complexity of Planning in both deterministic and nondeterministic domains (\FOND) remains the same as for classical reachability goals.
%(Note that this is not the case for planning in nondeterministic domains for LTLf goals.)  
However, while the theory of planning for \PLTLf goals is well understood, practical tools have not been specifically investigated.
In this paper, we make a significant leap forward in the construction of actual tools to handle \PLTLf goals. We devise a technique to polynomially translate planning for \PLTLf goals into standard planning. We show the formal correctness of the translation, its complexity, and its practical effectiveness through some comparative experiments. As a result, our translation enables state-of-the-art tools, such as \fastdownwardshort or \myndshort, to handle \PLTLf goals seamlessly, maintaining the impressive performances they have for classical reachability goals.
\end{abstract}

% \begin{keyword}
% \texttt{elsarticle.cls}\sep \LaTeX\sep Elsevier \sep template
% \MSC[2010] 00-01\sep  99-00
% \end{keyword}

\end{frontmatter}

% \linenumbers

\section{Introduction}

Planning for temporally extended goals has a long tradition in AI Planning, since the pioneering work in the late 90's \citep{Bacchus96,bacchus1997structured,BacchusK98,BacchusK00}, the work on planning via Model Checking \citep{CimattiGGT97,CimattiRT98,DeGiacomoV99,GiunchigliaT99,PistoreT01}, and the work on declarative and procedural constraints \citep{baier2006planning,BaierM06,BaierFBM08} just to mention a few research directions. 
% Moreover, the importance of temporally extended goals is witnessed by the inclusion of trajectory constraints in \textsf{PDDL3} \citep{GereviniHLSD09}.
Moreover, the inclusion of trajectory constraints in \textsf{PDDL3} \citep{GereviniHLSD09} witnesses the importance of temporally extended goals. 

In fact, it is quite compelling to specify agent's tasks (goals) by means of formalisms as Linear-time Temporal Logic (\LTL) that have been advocated as excellent tools to express property of processes by the Formal Methods community \citep{BaKG08}.
On the other hand, in AI Planning tasks need to terminate, thus a finite trace variant of \LTL, namely 
\LTLf, is often more appropriate to specify agent's tasks~\citep{baier2006planning,DegVa13,DegVa15}.
Planning for \LTLf goals has been studied in, e.g.,~\citep{BaierM06,DegVa13,torres2015polynomial} for deterministic domains and in~\citep{DegVa15,camacho2017nondeterministic,degiacomo2018automata} for nondeterministic domains. By now, we have a clear picture. Planning for \LTLf goals is PSPACE-complete for deterministic domain, just like for classical reachability goals \citep{Bylander94}. Whereas, if we turn to nondeterministic domains (\FOND), it is EXPTIME-complete in the domain as for classical reachability goals~\citep{RintanenICAPS04} and 2EXPTIME-complete in the goal.

In deterministic domains, the added expressiveness of \LTLf goals is payed in terms of algorithmic sophistication, but not in worst-case complexity \citep{torres2015polynomial}. While, in nondeterministic domains, the worst-case goal complexity also increases from poly to 2EXPTIME~\citep{degiacomo2018automata}.
% in the case of deterministic domains , and in worst-case goal complexity for nondeterministic domains.
%%
Additional difficulties come from the fact that \LTLf goals can express properties that are non-Markovian \citep{Gabaldon11} and they require to translate the \LTLf formulas into an exponential Nondeterministic Finite-state Automaton (\NFA) and a double-exponential Deterministic Finite-state Automaton (\DFA), respectively in the case of deterministic and nondeterministic domains.

Interestingly, an alternative to \LTLf is the Pure-Past Linear Temporal Logic, or \PLTLf \citep{LichtensteinPZ85,degiacomo2020pure}. This logic looks at the trace backward and expresses non-Markovian properties on traces using past operators. \PLTLf has the same expressive power of \LTLf (although translating \LTLf into \PLTLf and viceversa is in general unfeasible, since the best algorithms known are 3EXPTIME) \citep{degiacomo2020pure}. However, because of a property of reverse languages \citep{CKS81}, the \DFA corresponding to a \PLTLf formula is single exponential, and can be computed directly from the formula, or better, its corresponding Alternating Finite-state Automaton (\AFA) \citep{degiacomo2020pure}. 

In Planning, temporally extended goals expressed in \PLTLf require to  reach a state of affairs,  where a desired \PLTLf formula $\varphi$ holds, i.e. the trace produced to reach such a state of affairs is such that it satisfies the \PLTLf formula $\varphi$.

Our aim is to develop an approach to solve both classical and \FOND planning for \PLTLf goals that sidesteps altogether the construction of \DFA for \PLTLf formula as done, e.g., in \cite{degiacomo2018automata} for \LTLf.
Instead, exploits the key intuitive difference between \LTLf and \PLTLf that given the prefix of the trace computed so far, the \LTLf formula has to consider all possible extensions, while a \PLTLf can simply be evaluated on the history (the prefix of the trace) produced so far. This intuition is at the base of the results in  this paper.
We propose a technique that during the computation of the plan, for each node, the planner also keeps track of the satisfaction of some key subformulas of the goal. In particular, we get inspiration from the classical temporal logic formula progression techniques proposed in \cite{bacchus1997structured,BacchusK98,BacchusK00}, however this time looking at the trace backward in order to evaluate the goal formula over just the current search node, instead of the entire history produced. A similar approach was followed to tackle solving MDP with non Markovian rewards expressed in \PLTLf \citep{Bacchus96,bacchus1997structured}. The resulting technique we get is impressive. We can polynomially translate planning in deterministic domain into classical planning with only a minimal overhead, thus enabling the use of state-of-the art planning, such as \fastdownward~\citep{helmertFD} used in our experiments.  Moreover, exactly the same translation technique allows for solving \FOND for \PLTLf goals by polynomially reducing it to \FOND for classical reachability goals, again with minimal overhead, thus enabling the seamlessly use of state-of-the-art \FOND planning tools, such as \mynd~\citep{mattmuller2010pattern} used in our experiments.

The rest of the paper is organized as follows. In Section~\ref{sec:pltlf}, we give some preliminary notions of \PLTLf. In Section~\ref{sec:planning-for-pltlf-goals}, we introduce the framework of interest in this paper: classical and \FOND planning for temporally extended goals expressed in \PLTLf. In Section~\ref{sec:progression}, we give the key mathematical construction for planning for \PLTLf goals. In Section~\ref{sec:pddl}, we present our reduction technique and we  show how to implement it in \PDDL. In Section~\ref{sec:experiments}, we compare handling \PLTLf goals and classical reachability goals, as well as some comparison with handling \LTLf goals. In Section~\ref{sec:related-work}, we discuss previous related work highlighting similarities and differences. Finally, in Section~\ref{sec:conclusions}, we conclude the paper.

\section{Pure-Past Linear Temporal Logic}
\label{sec:pltlf}
In this section, we introduce the Pure-Past Linear Temporal Logic (\PLTLf). We refer to the survey \cite{degiacomo2020pure} for a more detailed presentation.\footnote{In \cite{degiacomo2020pure} 
\PLTLf is denoted as \PLTLfold.}

Given a set $\Prop$ of propositions, \PLTLf is defined by:
\[\begin{array}{rcl}
\varphi &::=& p \mid \lnot \varphi_1 \mid \varphi_1 \land \varphi_2  \mid \Yesterday\varphi_1 \mid \varphi_1 \Since \varphi_2 
\end{array}
\]
% where $\ttrue$ is the logical true, 
where $p\in \Prop$, 
$\Yesterday$ is the \emph{yesterday} operator and
$\Since$ is the \emph{since} operator.
We define the following common abbreviations:
% the logical false $\ffalse\equiv\lnot\ttrue$,
$\varphi_1\lor\varphi_2 \equiv\lnot(\lnot\varphi_1\land\lnot\varphi_2)$,
the \emph{once} operator
$ \Once\varphi \equiv \true\Since\varphi$, the \emph{historically} operator
$\Historically\varphi \equiv \lnot\Once\lnot\varphi$,
% $start\equiv\Historically\ffalse$ denotes the start of the trace,
% $\propnot{a}\equiv a\wedge \lnot\start$ says that at the current step, $a$ does not hold, 
and the propositional boolean constants $\true\equiv p \lor \lnot p$,
$\false\equiv \lnot true$. Also, $\start \defeq \lnot \Yesterday (\true)$ expresses that the trace has started.
% Note that $\ttrue \neq true$, as the latter does not accept the empty trace. 
% Without loss of generality, we assume that the negation cannot be in front of another negation, as the formula can be rewritten using the equivalence $\lnot\lnot\varphi \equiv \varphi$.

% Here, we define a variant that works also for empty traces.
\PLTLf formulas are interpreted on \emph{finite non-empty} traces, also called \emph{histories},
$\trace = \s_0\cdots \s_{n}$
where $\s_i$ at instant $i$ is a propositional interpretation 
over the alphabet $2^\P$. We denote by $\length(\trace)$ the length
$n+1$ of $\trace$, and 
% by $\slast(\trace)$ and 
by $\last(\trace)$ 
% the second last and 
the last element of the trace.
% , respectively.
Given a trace $\trace = \s_0\cdots \s_{n}$,
we denote by $\trace_{i,j}$, with $0\leq i \leq j \leq n$, the sub-trace $\s_i\dots \s_j$ obtained from $\trace$ starting from position $i$ and ending in position $j$.
% if $j < \length(\trace)$, $\trace_i\dots\trace_{n}$ if $j \ge \length(\trace)$. 
% Note that if $i \ge \length(\trace)$ 
% then $\trace_{i,j}$ denotes the empty trace.
%
We define the satisfaction relation $\trace,i \models \varphi$, stating that $\varphi$ holds at instant $i$, as follows:
	
\begin{itemize}
    % \item $\trace,i \models \ttrue$;
    \item $\trace,i \models p$ iff $\length(\trace)\ge 1$ and $p\in \s_i$ (for $p\in\P)$;
    \item $\trace,i \models \lnot\varphi$ iff $\trace,i\not\models\varphi$;
    \item $\trace,i \models \varphi_1\wedge\varphi_2$ iff $\trace,i\models\varphi_1$ and $\trace,i\models\varphi_2$
    \item $\trace,i \models  \Yesterday \varphi$ \tiff $i\geq1$ and $\trace,{i-1} \models \varphi$;
    \item   $\trace,i \models  \varphi_1 \Since \varphi_2$ \tiff there exists $k$, with $0 \leq k \leq i < \length(\trace)$ such that $\trace,k \models \varphi_2$ and for all $j$, with $ k<j\leq i$, we have that $\trace,j \models \varphi_1$.
\end{itemize}
A \PLTLf formula $\varphi$ is \emph{true} in $\trace$, denoted $\trace \models \varphi$, if  $\trace,  \length(\trace){-}1 \models \varphi$.

\subsection{Examples of \PLTLf Formulas}

We now give several examples of \PLTLf formulas, including the two kinds of formulas that we will later use in our experiments, to demonstrate that \PLTLf is an appropriate and helpful formalism to specify goals in planning.
% \todo[inline]{Give several examples of pure-past temporal formulas on finite traces, including the two forms used in the experiments.}

In many cases, we want the agent to achieve a goal \emph{g} after some condition \emph{c} has been met. In this setting, we identify the \emph{Immendiate-Response} pattern as $g \land \Yesterday(c)$ and the \emph{Bounded-Response} pattern $g \land \Yesterday^i(c)$ for $1\leq i \leq n$, where $n$ is the time bound within which the agent achieves the goal $g$. These and other patterns have been employed in the context of MDP rewards in \cite{Bacchus96}.
Other interesting \PLTLf formulas are $t \land (\lnot a \Since s)$ and $\Historically(b \rightarrow \Yesterday(\lnot b \Since f)$ \citep{degiacomo2020pure}. For instance, the former may state that before achieving task $t$, the agent was not in area $a$ anymore since the area was sanitized (\emph{s}). Whereas, the latter may enforce scenarios such as the one in which the agent has always paid the ticket fee \emph{f} before getting the bus \emph{b}.

Then, among common formulas, we also find the \emph{Strict-Sequence} pattern as $\Once(a \land \Yesterday(\Once(b \land \Yesterday(\Once(\dots)))))$ forcing the agent to achieve tasks $a, b, \dots$ sequencially, and the \emph{Eventually-All} pattern as $\bigwedge_{i=1}^n \Once(a_i)$ requiring to eventually achieve all tasks $a_i$. We use the \emph{Strict-Sequence} and the \emph{Eventually-All} patterns in our experiments later as they easily translate into their corresponding pure-future formulas.

Furthermore, widely used formula patterns can be found in \textsf{PDDL3} (Table~\ref{table:pddl3}) that standardized certain modal operators \citep{GereviniHLSD09} and in \declare (Table~\ref{table:declare}) that is the \emph{de-facto} stardard encoding language for Business Processes behaviors \citep{declareCSRD09}. Table~\ref{table:pddl3} and Table~\ref{table:declare} are both a non-exhaustive list of such common patterns including their translation to equivalent \LTLf formulas \citep{de2014insensitivity,camacho2019towards}. Here, we also provide the translation to their equivalent \PLTLf formula. Notably, many, but not all formulas, have a straightforward translation to the corresponding pure-future \LTLf formula.

%====================== PDDL3 TABLE ===========================
\begin{table}[]
\centering
\begin{tabular}{@{ }l@{ }l@{ }l@{ }}
\hline
\textsf{PDDL3} Operators & Equiv. \PLTLf Formula & Equiv. \LTLf Formula \\
\hline
\addlinespace
$(\textsf{at-end }\theta)$ & $\theta$ & $\Eventually(\theta \land \Last)$ \\
$(\textsf{always }\theta)$ & $\Historically \theta$ & $\Always \theta$ \\
$(\textsf{sometime }\theta)$ & $\Once \theta$ & $\Eventually \theta$ \\
$(\textsf{sometime-after }\theta_1\text{ }\theta_2)$ & $\lnot(\lnot\theta_2 \Since (\theta_1 \land\lnot\theta_2))$ & $\Always (\theta_1 \rightarrow \Eventually\theta_2) $ \\
$(\textsf{sometime-before }\theta_1\text{ }\theta_2)$ & $\Historically(\Once(a \land \Historically(a \lor \lnot b)) \rightarrow \Yesterday b)$ & $\theta_2 \Release \lnot \theta_1 $ \\
% $(\textsf{at-most-once }\theta)$ & $\Historically(\theta \rightarrow \Wyesterday(\Historically(\lnot \theta))$ & $\Always(\theta \rightarrow \Wnext(\Always(\lnot \theta)) $ \\
\addlinespace
\hline
\addlinespace
$(\textsf{hold-during }n_1\text{ }n_2\text{ }\theta)$ & 
    \begin{tabular}{@{}l@{}}$\bigvee_{0\leq i\leq n_1} (\theta \land \Yesterday^{i}(\start)) \lor$ \\ $\bigwedge_{n_1 < i\leq n_2} \Historically(\theta \lor \Wyesterday^{i}(\Yesterday(\true)))$\end{tabular}
      &     \begin{tabular}{@{}l@{}}$\bigvee_{0\leq i\leq n_1} \Next^{i}(\theta \land \Last) $ \\ $\lor \bigwedge_{n_1 < i\leq n_2} \Wnext^{i}(\theta)$\end{tabular} \\
\addlinespace
\hline
\addlinespace
$(\textsf{hold-after }n\text{ }\theta)$ & $\bigvee_{0\leq i\leq {n+1}} (\theta \land \Yesterday^{i}(\start))$ & $\bigvee_{0\leq i\leq {n+1}} \Next^{i} (\theta \land \Last)$ \\
\addlinespace
\hline
\end{tabular}
\caption{\textsf{PDDL3} operators with their equivalent \PLTLf and \LTLf formulas. Superscripts abbreviate nested temporal operators. $\theta$ is a propositional formula on fluents, $\Wyesterday\phi \equiv \lnot\Yesterday\lnot\phi$, $\phi_1 \Release \phi_2 \equiv \lnot(\lnot \phi_1 \lUntil \lnot\phi_2)$, and $\Last$ denotes the end of the trace in \LTLf.}
\label{table:pddl3}
\end{table}
%================================================================

%====================== DECLARE TABLE ===========================
\begin{table}[]
\centering
\begin{tabular}{@{ }l@{ }l@{ }l@{ }}
\hline
\declare Templates & Equiv. \PLTLf Formula & Equiv. \LTLf Formula \\
\hline
\addlinespace
$\textsf{init}(a)$ & $ \Once(a \land \lnot\Yesterday(\true))$ & $a$ \\
$\textsf{existence}(a)$ & $\Once a$ & $\Eventually a$ \\
$\textsf{absence}(a)$ & $\lnot\Once a$ & $\lnot \Eventually a$ \\
$\textsf{responded-existence}(a,b)$ & $\Once a \rightarrow \Once b$ & $\Eventually a \rightarrow \Eventually b$ \\
$\textsf{response}(a,b)$ & $\lnot(\lnot b \Since (a \land\lnot b))$ & $\Always (a \rightarrow \Eventually b) $ \\
$\textsf{precedence}(a,b)$ & $\Once(a \land \Historically(a \lor \lnot b))) \lor \Historically(\lnot b)$ & $(\lnot b \lUntil a) \lor \Always(\lnot b)$ \\
$\textsf{succession}(a,b)$ & \multicolumn{2}{c}{$\textsf{response}(a,b) \land \textsf{precendece}(a,b)$}  \\
$\textsf{chain-precedence}(a,b)$ & $\Historically(b \rightarrow \Yesterday a)$ & $\Always(\Next b \rightarrow a) \land \lnot b$ \\
\addlinespace
\hline
\addlinespace
$\textsf{chain-succession}(a,b)$ & 
\begin{tabular}{@{}l@{}}$(\Historically(\Yesterday a \rightarrow b) \land \lnot a) \land$ \\ $\Historically(\Yesterday(\lnot a) \rightarrow \lnot b)$\end{tabular}
& $\Always(a \leftrightarrow \Next b)$ \\
\addlinespace
\hline
\addlinespace
$\textsf{not-co-existence}(a,b)$ & $\Once a \rightarrow \lnot \Once b$ & $\Eventually a \rightarrow \lnot \Eventually b$ \\
$\textsf{not-succession}(a,b)$ & $\Historically (b \rightarrow \lnot \Once a)$ & $\Always (a \rightarrow \lnot \Eventually b)$ \\
$\textsf{not-chain-succession}(a,b)$ & $\Historically (b \rightarrow \lnot \Yesterday a)$ & $\Always (a \rightarrow \lnot \Next b)$ \\
$\textsf{choice}(a,b)$ & $\Once a \lor \Once b$ & $\Eventually a \lor \Eventually b$ \\
$\textsf{exclusive-choice}(a,b)$ & $(\Once a \lor \Once b) \land \lnot (\Once a \land \Once b)$ & $(\Eventually a \lor \Eventually b) \land \lnot (\Eventually a \land \Eventually b)$ \\
\addlinespace
\hline
\end{tabular}
\caption{\declare templates with their equivalent \PLTLf and \LTLf formulas. $a, b$ are atomic formulas.}
\label{table:declare}
\end{table}
%================================================================
Further examples of \PLTLf formulas can be found in the literature of various areas of AI. For instance, in \cite{Bacchus96} \PLTLf was used in to express non-Markovian rewards in decision processes, whereas in \cite{FisherW05,Gabaldon11,knobbout2016dynamic,AlechinaLD18} \PLTLf is used to express norms in multi-agent systems. 
%  temporally extended goals are considered quite interesting in the context of planning, and for a wide number of examples of \PLTLf objectives, one could, e.g., look at \cite{Bacchus96} (there \PLTLf was used in the context of MDPs). Also, \PLTLf is commonly employed to specify conditions and norms, as discussed in \cite{degiacomo2020pure}, and the references therein, see, e.g., \cite{FisherW05,Gabaldon11,knobbout2016dynamic,AlechinaLD18}.
% % various properties can be naturally expressed using \PLTLf as high-level specification \citep{FisherW05,knobbout2016dynamic, AlechinaLD18}. 

\subsection{Computational Advantage of \PLTLf over \LTLf}
\PLTLf has the same expressive power of \LTLf. However, compared to \LTLf, \PLTLf gives an exponential (worst-case) computational advantage in several contexts. 
Both \LTLf and \PLTLf can be \emph{translated} into  an equivalent \emph{Alternating Finite-state Automaton} (\AFA), in linear time. Here, equivalent means that if a formula is true in a trace, then the trace can be seen as a string recognized by the \AFA.
The \PLTLf computational advantage stems from a well-known language theoretic property of regular languages, for which the \AFA corresponding to the \PLTLf formula, can be transformed, in single exponential time, into a \DFA recognizing the reverse language \citep{CKS81}. Note that, in general, the \DFA for the language itself (not its reverse) can be double-exponentially larger than the \AFA. Hence, the conversion of \PLTLf formulas to their corresponding \DFAs is worst-case single exponential time (vs. double exponential time for \LTLf formulas) \citep{degiacomo2020pure}.
Consequently, the computational complexity of many problems involving temporal logics on finite traces, which explicitly or implicitly require to compute the corresponding \DFA, is affected by the exponential savings of \PLTLf.
For instance, this is the case for planning in non-deterministic domains (FOND)
\citep{camacho2017nondeterministic,degiacomo2018automata}, reactive synthesis
\citep{DegVa15,camacho2018finite}, MDPs with non-Markovian
rewards \citep{Bacchus96,BrafmanGP18rewards}, reinforcement learning
\citep{DeGiacomoIFP19,CamachoIKVM19}, and non-Markovian planning and
decision problems \citep{BrafmanG19planning,BrafmanG19regular}. 
Instead, note that this is not the case for planning in deterministic domains, where it is sufficient to reduce the temporal logic formula describing the goal into an \NFA \citep{degiacomo2018automata}.

Finally, we observe that, although there is often a computational advantage in using \PLTLf wrt \LTLf, transforming one into the other (and vice versa) can be triply exponential in the worst-case, and these are the best-known bounds \citep{degiacomo2020pure}. Therefore, the property of interest should be succinctly expressible directly in \PLTLf to exploit the computational advantage, as is often the case when the specifications \emph{naturally} talk about the past.
Certainly, if a property can be naturally expressed with \PLTLf, theoretical and practical evidence indicates that \PLTLf should be the language to go with.

\section{Planning for \PLTLftitle Goals}
\label{sec:planning-for-pltlf-goals}

In this paper, we consider both classical and \FOND planning for \PLTLf goals.
% In general, temporally extended goals are considered quite interesting in the context of planning, and for a wide number of examples of \PLTLf objectives, one could, e.g., look at \cite{Bacchus96} (there \PLTLf was used in the context of MDPs). Also, \PLTLf is commonly employed to specify conditions and norms, as discussed in \cite{degiacomo2020pure}, and the references therein, see e.g. \cite{Gabaldon11,FisherW05,knobbout2016dynamic,AlechinaLD18}.
%%
%%
% \subsection{Classical Planning for \PLTLf Goals}
% \todo[inline]{Introduce directly classical planning for \PLDLf goals}
% Following~\cite{geffner2013concise}, a \emph{Classical Planning} domain for temporally extended goals in deterministic domains
% 
% \subsection{FOND Planning for \PLTLf Goals}
% \todo[inline]{Then Introduce dicrectly classical planning for \PLDLf goals}
%%
Following \cite{geffner2013concise}, a planning domain model is a tuple $\D = \tup{2^{\F}, A, \alpha, tr}$, where $2^{\F}$ is the set of possible states and $\F$ is a set of fluents (atomic propositions); $A$ is the set of actions; $\alpha(s) \subseteq A$ represents the set of applicable actions in state $s$; and $tr(s, a)$ represents the non-empty set of successor states that follow action $a$ in state $s$. 
Such a domain model $\D$ is assumed to be compactly represented (e.g., in \PDDL~\citep{mcdermott1998pddl}), hence its size is $|\F|$.
Given the set of literals of $\F$ as $\mathit{Lits}(\F) := \F \cup \{ \lnot f \mid f \in \F \}$, every action $a \in A$ is usually characterized by $\tup{\Pre_a, \Eff_a}$, where $\Pre_a \subseteq \mathit{Lits}(\F)$ represents action preconditions and $\Eff_a$ represents action effects. An action $a$ can be applied in a state $s$ if the set of literals in $\Pre_a$ holds true in $s$. 

In \emph{Classical Planning}, the result of applying $a$ in $s$ is a successor state $s^\prime$ determined by $\Eff_a$ (i.e., actions have deterministic effects: $|tr(s, a)| = 1$ in all states $s$ in which $a$ is applicable). On the other hand, in \emph{Fully Observable Nondeterministic Domain} (\FOND) planning, the successor state $s^\prime$ is nondeterministically drawn from one of the $\Eff^{i}_{a}$ in $\Eff_a = \{ \Eff^{1}_{a}, \dots, \Eff^{n}_{a} \}$. That is, some action effects have an uncertain outcome and cannot be predicted in advance (i.e., $|tr(s, a)| \geq 1$ in all states $s$ in which $a$ is applicable). 
In \PDDL, the uncertain outcomes are expressed using the \pred{oneof}~\citep{bryce2008international} keyword, as widely used by several \FOND planners.
Intuitively, a nondeterministic domain evolves as follows: from a given state $s$, the agent chooses
a possible action $a\in A$, 
%(i.e., such that there exists an $s^\prime$ such that $(s, a, s^\prime) \in tr)$, 
after which the environment chooses a successor state $s^\prime$ such that $(s, a, s^\prime) \in tr$. 
In choosing its actions the agent can consider the whole trace (i.e., sequence of states) produced so far since the domain is fully observable. 

%The agent can choose its action based on the history of states so far (i.e., the agent has full observation).

% Differently from \FOND Planning where some actions are \textit{nondeterministic} (i.e, $|tr(s, a)| \geq 1$ in all states $s$ in which $a$ is applicable) and effects cannot be predicted in advance, in Classical Planning~\citep{ghallab2004automated} all actions have \textit{deterministic} effects (i.e., $|tr(s, a)| = 1$ in all states $s$ for which $a$ is applicable). 
% In this way, Classical Planning can be considered as a special case of \FOND Planning.

% Unlike \textit{Classical Planning}~\citep{ghallab2004automated}, in which actions are \textit{deterministic} (i.e., $|tr(s, a)| = 1$ in all states $s$ for which $a$ is applicable), in \FOND planning, some actions have \textit{uncertain outcomes}, in that they are \textit{nondeterministic} (i.e., $|tr(s, a)| \geq 1$ in all states $s$ in which $a$ is applicable), and effects cannot be predicted in advance. 
% \PDDL, expresses uncertain outcomes using the \texttt{oneof}~\citep{bryce2008international} keyword, as widely used by several \FOND planners, such as FOND-SAT~\citep{geffner2018compact}, MyND~\citep{mattmuller2010pattern}, and PRP~\citep{muise2012improved,MuiseAAAI14,MuiseICAPS14}. 

Formally, a planning problem for \PLTLf goals is defined as follows.

\begin{definition}\label{def:planning-problem}
A planning problem is a tuple $\Gamma = \tup{\D, s_{0}, \varphi} $, where $\D$ is a domain model, $s_{0}$ is the initial state, i.e., an initial assignment to fluents in $\F$, and $\varphi$ is a \PLTLf goal formula over $\F$.
\end{definition}
A solution to planning problem $\Gamma$, when the domain model $\D$ is deterministic, is a \emph{plan} $\pi = a_0\dots a_n$, which is a sequence of actions $a \in A$ such that, when executed, induces a finite \emph{trace} (i.e., a finite sequence of states) $s_{0},\dots, s_{n}$, where $s_{i+1} \in tr(s_i, a_i)$ and $a_i \in \alpha(s_i)$ for $i = 0,\dots, {n-1}$, which satisfies the \PLTLf goal formula $\varphi$, i.e., $(s_{0},\dots, s_{n}) \models\varphi$.
To solve  $\Gamma$ for \PLTLf goals, we can build the deterministic automata for the domain and the nondeterministic automaton for the goal formula, compute their product, and then check non-emptiness on the resulting automaton returning a plan, if exists \citep{DegVa13,degiacomo2018automata}.

\begin{theorem}[\citealp{DegVa13}]
Classical Planning for \PLTLf goals is PSPACE-complete in both the domain and the goal formula.
\end{theorem}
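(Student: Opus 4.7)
\emph{Proof proposal.} The statement has two sides, so I would handle membership and hardness separately.

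For \emph{hardness}, I would reduce from classical reachability planning, which is already PSPACE-hard by Bylander's theorem cited in the introduction. The key observation is that a reachability goal ``eventually reach a state satisfying $p$'' is semantically identical to the \PLTLf goal consisting of the atom $p$, since \PLTLf formulas are evaluated at the last instant of the induced trace. The reduction is the identity on the domain and sends the target literal $p$ to itself, viewed as a trivial \PLTLf formula of constant size. Hence hardness in the size of the domain transfers; hardness in the goal is likewise easy to witness by a trivially padded instance, or by recalling standard PSPACE-hardness of satisfiability-style fragments embeddable into \PLTLf.

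For \emph{membership}, the plan is to describe a nondeterministic algorithm using polynomial space and then invoke Savitch's theorem. The algorithm walks forward through $\D$ starting from $s_0$ while maintaining two pieces of polynomially-bounded information: the current domain state $s$, coded in $|\F|$ bits, and an ``evaluation state'' for $\varphi$ summarising the truth value of every $\psi \in \sub(\varphi)$ on the prefix produced so far, coded in $O(|\varphi|)$ bits. At each step the algorithm nondeterministically guesses an applicable action $a\in\alpha(s)$, transitions to the unique successor $s'\in tr(s,a)$ (the domain is deterministic), and updates the evaluation state by one-step \PLTLf progression. It accepts as soon as the evaluation state records $\varphi$ as true at the current instant. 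Since each configuration has size $O(|\F|+|\varphi|)$, this is an NPSPACE procedure, and NPSPACE${}={}$PSPACE gives the upper bound.

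The step I expect to require the most care is justifying that the evaluation state really is polynomial and correctly updatable in one step. I would prove this by induction on the structure of $\varphi$: for each $\psi\in\sub(\varphi)$, the truth of $\psi$ on $\tau_{0,i}$ is a boolean combination of the assignment $s_i$ and the truth values of the subformulas of $\psi$ on $\tau_{0,i-1}$; the base case is atoms, the cases for $\lnot$ and $\land$ are immediate, the case $\Yesterday\psi'$ simply copies the $i{-}1$ truth value of $\psi'$, and the case $\psi_1\Since\psi_2$ uses the standard recurrence $[\psi_1\Since\psi_2]_i = [\psi_2]_i \lor ([\psi_1]_i \land [\psi_1\Since\psi_2]_{i-1})$. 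This gives a deterministic, polynomial-time update rule acting on a bit-vector indexed by $\sub(\varphi)$; equivalently, one can view the procedure as running the linear-size \AFA for $\varphi$ one step per action, whose configurations are subsets of $\sub(\varphi)$ and hence already polynomially sized. Either formulation closes the membership argument and, together with the reduction above, yields PSPACE-completeness in both domain and goal.
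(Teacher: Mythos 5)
Your argument is correct, and it is worth noting that the paper does not actually prove this theorem itself --- it cites \cite{DegVa13} and sketches the standard route: build the automaton for the domain and a nondeterministic automaton for the goal, take the product, and check non-emptiness. Your membership argument packages the same idea differently, and in a way that is arguably closer in spirit to this paper's own Section~4: instead of materialising an \NFA for $\varphi$, you run an on-the-fly deterministic update of a bit-vector indexed by $\sub(\varphi)$, which is exactly the $\sigma_i$ / $\val$ progression machinery of Theorem~\ref{th:progression} (equivalently, one state of the single-exponential \DFA for the \PLTLf formula, represented in polynomial space). What the automata phrasing buys is a uniform statement that also covers \LTLf; what your phrasing buys is that the update is deterministic and needs no subset construction, which is precisely the \PLTLf-specific advantage the paper is built on. Two points deserve a sentence more care than you gave them: first, PSPACE-hardness \emph{in the goal formula} does not follow from padding --- the clean argument is to fix a trivial ``universal'' domain whose traces range over all propositional sequences, so that planning for goal $\varphi$ coincides with \PLTLf satisfiability, which is PSPACE-hard; second, your NPSPACE machine must bound its run length (e.g.\ by a counter of polynomially many bits up to $2^{O(|\F|+|\varphi|)}$ steps), justified by the observation that a repeated configuration $(s,\text{eval-state})$ can be excised from a witnessing trace. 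Both are routine, but the proof is not complete without them.
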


Instead, when $\Gamma$ is a \FOND planning problem, solutions to $\Gamma$, i.e., plans, are \emph{strategies} (or \emph{policies}).
A strategy is defined as a partial function $\pi: (2^\F)^{+} \to A$ mapping traces into applicable actions. Note that when the strategy needs only finite memory, then it can be represented as a  finite-state transducer, and this is the case for \LTLf and \PLTLf goals \citep{degiacomo2018automata}.
A strategy $\policy$ for $\Gamma$, starting from the initial state $\s_0$, induces a set of  \emph{generated executions} $\Lambda$, each of which is a possibly infinite trace $s_{0},s_{1},\dots$
%obtained by choosing some possible outcome of actions instructed by the strategy.
where $s_0$ is the initial state, $(s_i, a_i,s_{i+1})\in tr$, $a_i = \policy(s_{0},\dots, s_{i})$ (hence $\policy(s_{0},\dots, s_{i})$ is not undefined), and $a_i\in\alpha(s_i)$,  for $i = 1,2,\dots$.
If for a certain state $s_n$ we have that $\policy(s_{0},\dots, s_{n})$ is undefined, then the generated execution $s_{0},\dots, s_{n}$ is a finite trace.

As usual, we consider two kinds of solutions to FOND planning problems: strong solutions and strong-cyclic solutions~\citep{CimattiPRT03}. 
A strategy $\policy$ is a \emph{strong solution} to $\Gamma$ with \PLTLf goal $\varphi$, if every generated execution is a finite trace $\trace$ such that $\last(\trace)\models \varphi$. A  strategy is a \emph{strong-cyclic} solution to $\Gamma$ with \PLTLf goal $\varphi$, if every generated execution that is a \emph{stochastic fair trace} is also a finite trace such that $\last(\trace)\models \varphi$, cf. \cite{AminofGR20}.
When a strategy $\policy$ is a solution (either strong or strong-cyclic, depending on the kind of solution we are interested in), we say that $\policy$ is \emph{winning}.

% Intuitively, a strategy $\policy$ for $\Gamma$ fulfills the formula $\varphi$ if and only if the sequence of states generated by $\policy$, despite the nondeterministic effects of the environment, is accepted by $\A_\varphi$, i.e., the automaton associated to the goal formula $\varphi$.

\begin{theorem}[\citealp{degiacomo2020pure}]
\FOND Planning (strong or strong-cyclic) for \PLTLf goals is EXPTIME-complete in both the domain and the goal formula.
\end{theorem}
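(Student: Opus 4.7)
The plan is to prove both inclusion in EXPTIME and matching hardness via the automata-theoretic machinery already invoked in the paper, proving hardness in the domain and in the goal separately.

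For the upper bound I would exploit the single-exponential \PLTLf-to-\DFA translation recalled in Section~\ref{sec:pltlf}: given $\Gamma = \tup{\D, s_0, \varphi}$, first construct a \DFA $\A_\varphi$ of size $2^{O(|\varphi|)}$ accepting exactly the histories satisfying $\varphi$. Then form the synchronous product of $\D$ (whose explicit state space has size $2^{|\F|}$) with $\A_\varphi$, obtaining a two-player arena of size $2^{O(|\F|+|\varphi|)}$ in which the agent chooses actions, the environment resolves nondeterminism, and a state is declared good iff its $\A_\varphi$-component is accepting. Strong \FOND solutions then correspond to winning strategies in an adversarial reachability game on this arena, and strong-cyclic solutions to winning strategies in the associated stochastic-fairness game (cf.~\cite{AminofGR20}); both can be solved in time polynomial in the arena, so the total cost is single-exponential in $|\F|+|\varphi|$.

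For EXPTIME-hardness in the domain I would simply inherit the lower bound from \FOND planning with classical reachability goals~\citep{RintanenICAPS04}: a reachability goal $g$ is itself a trivial \PLTLf formula (of constant length), so any reachability \FOND instance embeds without blow-up into a \PLTLf \FOND instance.

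The main obstacle is EXPTIME-hardness in the goal formula, since here the domain must remain fixed (or at most polynomial) while the complexity is driven by $\varphi$. The plan is to reduce from an EXPTIME-hard two-player problem, e.g.\ the acceptance problem for polynomial-space alternating Turing machines, using a tiny fixed domain that merely produces a trace of ``tape-cell events'' and a polynomial-size \PLTLf formula that forces the trace to encode a legal, accepting computation. The key leverage is that \PLTLf can look backward: using $\Yesterday$, $\Since$, and $\Historically$, a polynomial-length formula can express that the current tape symbol at each position is consistent with the three relevant cells of the previous configuration, and that the alternation/acceptance condition holds at the final state. Because the corresponding \DFA is exponential and reachability games on such \DFAs are EXPTIME-hard, this yields the matching lower bound. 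The delicate part will be encoding configurations succinctly in the trace using only constant fluents while keeping $\varphi$ polynomial; this is precisely where past operators pay off, mirroring the contrast with the 2EXPTIME goal-hardness of the \LTLf case, and recovering the result of~\cite{degiacomo2020pure}.
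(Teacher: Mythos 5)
Your upper bound and your EXPTIME-hardness in the domain follow essentially the same route as the paper: the single-exponential \PLTLf-to-\DFA translation, the product with the (exponentially large, explicitly expanded) domain, and a game solvable in time polynomial in the arena, with or without stochastic fairness; domain-hardness is inherited from reachability \FOND because a propositional goal is itself a \PLTLf formula evaluated at the last state. Be aware, however, that the paper does not actually prove this theorem --- it is imported wholesale from \cite{degiacomo2020pure}, and the surrounding text only sketches the membership direction.

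The genuine gap is in your EXPTIME-hardness in the goal formula. The sentence ``because the corresponding \DFA is exponential and reachability games on such \DFAs are EXPTIME-hard, this yields the matching lower bound'' is a non sequitur: an exponential \emph{upper} bound on the \DFA size implies nothing about hardness (plenty of \PLTLf families have small \DFAs), and hardness of games on explicit \DFAs cannot be invoked when the \DFA is not the input. That leaves your APSPACE reduction, which is the right general idea but is precisely the part left unexecuted. With a constant-size domain each trace position carries $O(1)$ bits, so a configuration of the polynomial-space ATM spans polynomially many positions and cross-configuration consistency must be stated via $\Yesterday^{p}$ for a polynomial offset $p$ inside a $\Historically$ --- that much is fine --- but you must also (i) route existential choices to the agent (action selection) and universal choices to the environment (\oneof outcomes), (ii) prevent the environment from winning simply by emitting garbage that falsifies $\varphi$ for reasons unrelated to the simulated run, and (iii) prevent the agent from satisfying $\varphi$ by writing an inconsistent history on universal branches. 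Until those are worked out, the goal lower bound is asserted rather than proved; the clean alternative is to cite it from \cite{degiacomo2020pure}, as the paper does.
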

In general, to solve  \FOND planning for \PLTLf goals, one can build the deterministic automata for the domain and for the goal formula, compute their product, and finally solve a \DFA game (with or without stochastic fairness) on the resulting automaton~\citep{degiacomo2018automata}.
The EXPTIME-complete in the \PLTLf goal formula contrasts with the 2EXPTIME-complete result for \LTLf goals. The EXPTIME upper-bound in the size of the \PLTLf goal formula  can be obtained using the above-mentioned automata technique~\citep{degiacomo2018automata}, with the additional observation that computing the \DFA corresponding to the \PLTLf goal is EXPTIME, instead of 2EXPTIME as for \LTLf~\citep{degiacomo2020pure}.

\section{Theoretical Bases to Handle \PLTLftitle Goals}
\label{sec:progression}

% \todo[inline]{add $\sub(\varphi), \PA(\varphi)$ here.}
In this section, we develop the bases for our technique. First, we observe that any sequence of actions produces a trace on which \PLTLf formulas can be evaluated. Therefore, while the planning process goes on, sequences of actions are produced, traces are generated, and  over them \PLTLf goals can be evaluated. The difficulty is that evaluating \PLTLf formulas requires a trace, and searching through traces is quite demanding. Instead, our technique does not consider traces at all. In particular, it makes and exploits the following observations: \myi to evaluate the \PLTLf goal formula only the truth value of its subformulas is needed; \myii every \PLTLf formula can be put in a form where its evaluation depends on the current propositional evaluation and the evaluation of a key set of \PLTLf subformulas at the previous instant; \myiii one can recursively compute and keep the value of such a small set of formulas as additional propositional variables in the state of the planning domain.
% We now give details of the technique to evaluate \PLTLf formulas on finite traces based on formula progression, which will be the foundational building block for our novel planning approach, and prove its correctness.

% We introduce a technique to evaluate \PLTLf formulas on finite traces based on formula progression, which will be the foundational building block for our novel planning approach explained later in this paper. 
% We now give the detail of the technique and prove its correctness.

% We introduce a technique to evaluate \PLTLf formulas on finite traces based on formula progression, which will be the foundational building block for our novel planning approach explained later in this paper. 

We start by denoting with $\sub(\varphi)$ the set of all subformulas of $\varphi$~\citep{DegVa13}.
% obtained from the abstract syntax tree of $\varphi$~\citep{DegVa13}, and by $\PA(\varphi) \subseteq \sub(\varphi)$ the set of maximal proper subformula(s) of $\varphi$~\citep{LiRPZV19}. Therefore, for a \PLTLf formula $\varphi$, we have:
% \begin{itemize}
%     \item $\PA(\varphi) = \{\varphi\}$ if $\varphi$ is a literal or a temporal formula;
%     \item $\PA(\varphi) = \PA(\varphi_1) \cup \PA(\varphi_2)$ if $\varphi = (\varphi_1 \land \varphi_2)$ or $\varphi = (\varphi_1 \lor \varphi_2)$.
% \end{itemize}
For instance, if $\varphi = a \land \lnot \Yesterday(b \lor c)$, where $a, b, c$ are atomic, then $\sub(\varphi) = \{ a, b, c, (b \lor c), \lnot \Yesterday(b \lor c),  \Yesterday(b \lor c), a \land \Yesterday(b \lor c)\}$. %, whereas $\PA(\varphi) = \{ a, \Yesterday(b \lor c) \}$.

In general, modalities in \LTL, and therefore in \LTLf, can be decomposed into present and future components \citep{Emerson1990TemporalAM,Bacchus96}. Analogously, \PLTLf formulas can be decomposed into present and past components, by recursively applying the following transformation function $\pnf(\cdot)$:
% \begin{definition}
% \label{def:pnf}
% A \PLTLf formula $\varphi$ is in Previous Normal Form~(\PNF) if $\PA(\varphi)$ consists of literals and $\Yesterday$-formulas only and can be computed as follows:
\begin{itemize}
    \item $\pnf(a) = a$;
    \item $\pnf(\Yesterday\phi) = \Yesterday\phi$;
    \item $\pnf(\phi_1 \Since \phi_2) = \pnf(\phi_2) \lor (\pnf(\phi_1) \land \Yesterday(\phi_1\Since\phi_2))$.    
    \item $\pnf(\phi_1 \land \phi_2) = \pnf(\phi_1) \land \pnf(\phi_2)$;
    \item $\pnf(\lnot\phi) = \lnot \pnf(\phi)$;
    % \item $\pnf(\Once\varphi) = \pnf(\varphi) \lor \Yesterday(\Once\varphi)$;
    % \item $\pnf(\Historically\varphi) = \pnf(\varphi) \land \Yesterday(\Historically\varphi)$;
\end{itemize}
For convenience, we extend the definition of $\pnf(\cdot)$ to $\Once\phi$ and $\Historically\phi$ as follows: $\pnf(\Once\phi) = \pnf(\phi) \lor \Yesterday(\Once\phi)$; and $\pnf(\Historically\phi) = \pnf(\phi) \land \Yesterday(\Historically\phi)$.
%\end{definition}
We say that a formula resulting from the application of $\pnf(\cdot)$ is in Previous Normal Form~(\PNF). 
% \todo[inline]{R1: This sentence isn't fully clear: "Note that formulas in Previous Normal Form have proper temporal subformulas (i.e., subformulas whose main construct is a temporal operator) only of the form $Y\phi$." Aren't there also temporal subformulas of the form $\phi_1 S \phi_2$? I believe that what is meant here is probably that the latter formulas only appear in the scope of a Y modality, but wasn't sure what "main construct" meant, and don't see a definition.}
Note that formulas in \PNF have proper temporal subformulas (i.e., subformulas whose main construct is a temporal operator) appearing only in the scope of the $\Yesterday$ operator. 
Note also that the formulas of the form $\Yesterday\phi$ in $\pnf(\varphi)$ are such that $\phi\in\sub(\varphi)$.

% Intuitively, we can transform any \PLTLf formula in its \PNF version separating what is true at the \emph{current} instant and what must have been true at the \emph{previous} instant.

\begin{theorem}
\label{th:pnf}
Every \PLTLf formula $\varphi$ can be converted to its \PNF form $\pnf(\varphi)$ in linear-time in the size of the formula. Moreover, $\pnf(\varphi)$ is equivalent to $\varphi$.
\end{theorem}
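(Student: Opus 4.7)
The plan is to prove both claims by structural induction on $\varphi$.

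For the equivalence $\pnf(\varphi) \equiv \varphi$, the cases of atoms and $\Yesterday \psi$ are immediate since $\pnf$ acts as the identity. For Boolean connectives ($\land$, $\lnot$) the induction hypothesis lifts through the propositional context. The only substantive case is $\Since$: by the induction hypothesis $\pnf(\varphi_1) \equiv \varphi_1$ and $\pnf(\varphi_2) \equiv \varphi_2$, so it suffices to establish the fixpoint-style unfolding
\[
 \varphi_1 \Since \varphi_2 \;\equiv\; \varphi_2 \lor \bigl(\varphi_1 \land \Yesterday(\varphi_1 \Since \varphi_2)\bigr).
\]
I would verify this directly from the semantic clause for $\Since$ by case analysis at an arbitrary position $i$: if $\varphi_2$ already holds at $i$, take $k=i$; otherwise the witness $k$ satisfies $k \le i-1$, giving $i \ge 1$ and $\trace,i-1 \models \varphi_1 \Since \varphi_2$, which combined with $\trace,i \models \varphi_1$ (from the ``for all $k<j\leq i$'' clause) yields the right-hand side. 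The converse direction is analogous.

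For the linear-time claim, I would describe the computation of $\pnf(\varphi)$ as a single bottom-up traversal of the syntactic DAG of $\varphi$, where each subformula is visited at most once and a pointer to its PNF is cached. Each rule fires in $O(1)$ and produces at most a constant number of new internal nodes: atoms and $\Yesterday$-subformulas contribute $O(1)$; $\land$ and $\lnot$ contribute one combining node on top of already-computed children; and the rule for $\Since$ contributes a constant-size gadget ($\lor$, $\land$, $\Yesterday$) whose leaves are pointers to $\pnf(\varphi_1)$, $\pnf(\varphi_2)$, and the \emph{already existing} node for $\varphi_1 \Since \varphi_2$ in the input. Thus both the running time and the size of the resulting DAG are $O(|\varphi|)$.

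The main subtlety I expect is the complexity bound: if one measures the output as a tree (unfolding the back-pointer inside $\Yesterday(\varphi_1 \Since \varphi_2)$), the size may grow quadratically, as iterated nesting of $\Since$ repeatedly duplicates its arguments. I would therefore explicitly point out that $\pnf(\cdot)$ is meant to produce a \emph{shared} representation in which the subformula $\varphi_1 \Since \varphi_2$ appearing under $\Yesterday$ is the same node as the original input, so that the standard memoized traversal yields both the linear-time bound and a linear-size representation; this is also what suffices for the subsequent sections, since $\pnf(\varphi)$ is used only through its set of $\Yesterday$-subformulas, which in any case has size at most $|\sub(\varphi)|$.
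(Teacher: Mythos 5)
Your proof is correct and follows the only natural route here: the paper itself dismisses this theorem as ``immediate from the definition of $\pnf(\cdot)$ and the semantics,'' and your structural induction, whose only substantive step is the semantic verification of the fixpoint unfolding $\varphi_1 \Since \varphi_2 \equiv \varphi_2 \lor (\varphi_1 \land \Yesterday(\varphi_1 \Since \varphi_2))$, is exactly the argument being elided. Your further observation that the linear-time and linear-size claims require reading $\pnf(\cdot)$ as producing a shared (DAG) representation in which $\Yesterday(\varphi_1 \Since \varphi_2)$ points back to the existing input node --- since the unfolded tree can grow quadratically under nested $\Since$ --- is a genuine subtlety the paper glosses over, and your resolution is the right one given that the construction is only ever used downstream through the set $\Sphi$ of $\Yesterday$-subformulas.
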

\begin{proof}
Immediate from the definition of $\pnf(\cdot)$ and the semantics of  \PLTLf  formulas.
\end{proof}

% When we progress a pure-past temporal formula, we need to know which subformulas become true at each instant of time while consuming propositional interpretations in the trace, which, in our case, corresponds to a sequence of planning domain states. 
Now, we show that to evaluate a \PLTLf formula $\varphi$, we only need to keep track of the truth values of some key subformulas of $\varphi$. %, while consuming propositional interpretations in the history, i.e., the sequence of planning domain states traversed so far.
To do so, we introduce $\Sphi\subseteq\sub(\varphi)$ as the set of \emph{propositions} of the form $\quoted{\Yesterday\phi}$ containing:
\begin{itemize}
    \item $\quoted{\Yesterday\phi}$ for each subformula of $\varphi$ of the form $\Yesterday\phi$;
    \item $\quoted{\Yesterday(\phi_1 \Since \phi_2)}$ for each subformula of $\varphi$ of the form $\phi_1 \Since \phi_2$.
\end{itemize}

% To establish the truth values of each proposition in $\Sphi$, we define $\sigmaY{}$ as:
To keep track of the truth of  each proposition in $\Sphi$, we define a specific interpretation $\sigmaY{}$:
$$
\sigmaY{} : \Sphi \to \{\top, \bot\}
$$
Intuitively, given an instant $i$, $\sigmaY{i}$ tells us which propositions in $\Sphi$ are true at instant $i$. 
By suitably maintaining the value of propositions in $\Sphi$ in $\sigmaY{i}$, we can 
evaluate a \PLTLf formula just by using the propositional interpretation in the current instant $i$ and the truth value assigned at the previous instant by $\sigmaY{i}$ to the subformulas involving the $\Yesterday$ operator. 
% , we just need to consider the value of subformulas only in two two consecutive instants of time the current one $\s_i$ given by  $\trace$. 
% In particular, the idea is to compute the true assignments of the \emph{current} instant with the exclusive knowledge of true assignments in the \emph{previous} instant.
% In particular, one instant of time corresponds to the \emph{current} instant, whereas the other instant of time corresponds to the \emph{previous} instant. 
% For such a reason, we added a Y-subscript to $\sigmaY{}$ to highlight the fact that it refers to the previous instant.
%%
%%
%%
% \todo[inline]{R2: In Def. 2, items 3, 4, and 5, I would recommend to better separate object language and meta language. In particular, some of the disjunction, conjunction and negation symbols on the right-hand sides of those three items should be meta-language "and"s, "or"s, and "not"s.}
\begin{definition}
\label{def:val}
Let $\s_i$ be a propositional interpretation over $\P$, $\sigmaY{i}$ a propositional interpretation over $\Sphi$, and $\phi$ a \PLTLf subformula in $\sub(\varphi)$, we define the predicate $\val(\phi,\sigmaY{i}, \s_i)$, recursively as follows:
\begin{itemize}
    \item $\val(a, \sigmaY{i}, \s_i)\tiff \s_i\models a$;
    \item $\val(\Yesterday\phi', \sigmaY{i}, \s_i)  \tiff  \sigmaY{i}\models\quoted{\Yesterday\phi'}$;
    % \item $\begin{aligned}[t]
    % &\val(\varphi_1\Since\varphi_2, \sigmaY{i}, \s_i) \tiff\\
    % &\val(\varphi_2, \sigmaY{i}, \s_i)   \lor (\val(\varphi_1, \sigmaY{i}, \s_i) \land \sigmaY{i}\models \quoted{\Yesterday(\varphi_1\Since\varphi_2)});
    % \end{aligned}$
    \item $\val(\phi_1\Since\phi_2, \sigmaY{i}, \s_i) \tiff \val(\phi_2, \sigmaY{i}, \s_i)   \lor (\val(\phi_1, \sigmaY{i}, \s_i) \land \sigmaY{i}\models \quoted{\Yesterday(\phi_1\Since\phi_2)});$
    % \item $\val(\varphi_1 \land \varphi_2, \sigmaY{i-1}, \s_i) \tiff\\ 
    % ~\qquad\qquad\val(\varphi_1, \sigmaY{i-1}, \s_i) \land \val(\varphi_2, \sigmaY{i-1}, \s_i)$;
    % \item $\begin{aligned}[t]
    % \val(\varphi_1\, \land & \,\varphi_2, \sigmaY{i}, \s_i) \tiff\\ 
    % &\val(\varphi_1, \sigmaY{i}, \s_i) \land \val(\varphi_2, \sigmaY{i}, \s_i);
    % \end{aligned}$
    \item $\val(\phi_1\, \land \,\phi_2, \sigmaY{i}, \s_i) \tiff \val(\phi_1, \sigmaY{i}, \s_i) \land \val(\phi_2, \sigmaY{i}, \s_i);$
    \item $\val(\lnot\phi', \sigmaY{i}, \s_i) \tiff \lnot\val(\phi', \sigmaY{i}, \s_i)$.
\end{itemize}
\end{definition}
Intuitively, the $\val(\phi,\sigmaY{i}, \s_i)$ predicate allows us to determine what is the truth value of any \PLTLf formula $\phi \in\sub(\varphi)$ by reading a propositional interpretation $\s_i$ from trace $\trace$ and keeping track of the truth value of the subformulas of the form $\Yesterday\phi'\in\sub(\varphi)$ by means of $\sigmaY{i}$.

% Now, given an assignment $\sigmaY{i-1}$ at time-step $i-1$ and a propositional interpretation $\s_i$ at the current time-step, we define the progression function $\prog(\cdot)$ to compute, at every instant $i$, the next assignment of subformulas in $\Sphi$ as $\prog(\sigmaY{i-1}, \s_i) = \sigmaY{i}$. Note again that $\sigmaY{i-1}$ and $\s_i$ refer to two distinct but consecutive instants of time, i.e., $i{-}1$ and $i$. In particular, $\sigmaY{i-1}$ refers to the \emph{previously} known assignment to propositional formulas in $\Sphi$, whereas $\s_i$ refers to the \emph{current} instant of time, which is needed to compute the new assignment $\sigmaY{i}$.
% Next, we generalize the progression function to finite traces by defining $\prog(\sigmaY{i-1}, \s_i \trace) =
% \prog(\prog(\sigmaY{i-1}, \s_i), \trace)$ for every $\s_i, \trace$ over $\P$. 

Now, given a trace $\trace=\s_0\cdots\s_n$ over $\P$, we compute a corresponding trace $\tracePlus=\sigmaY{0}\cdots\sigmaY{n}$ over $\Sphi$ where:
\begin{itemize}
    \item $\sigmaY{0}$ is such that $\sigmaY{0}(\quoted{\Yesterday\phi})\doteq\bot$ for each  $\quoted{\Yesterday\phi}\in\Sphi$;
    \item $\sigmaY{i}$ is such that $\sigmaY{i}(\quoted{\Yesterday\phi})\doteq\val(\phi, \sigmaY{i{-}1}, \s_{i{-}1})$, for all $i$ with $0 < i \leq {n}$.
\end{itemize}
% \todo[inline]{the following is not true anymore!}
% Note that while $\trace$ has at least one element, $\tracePlus$ has at least two elements: $\sigmaY{1}$ corresponding to $\s_0$ and the extra initial element $\sigmaY{-1}$.

% The trace $\tracePlus$ denotes the right sequence of assignments on which we can evaluate our \PLTLf formula. Intuitively, $\tracePlus$ can also be seen as obtained by inductively applying the progression on the trace $\trace$ starting from $\sigmaY{-1}$.

First, we show that for traces of length $1$ the following result holds.

\begin{lemma}
\label{lem:ev}
Let $\varphi$ be \PLTLf formula over $\P$, $\phi\in\sub(\varphi)$ a subformula of $\varphi$, and $\trace = \s_0$ a trace over $\P$ of length 1. Then, $\s_0\models\phi \tiff \val(\phi, \sigmaY{0}, \s_0)$.
\end{lemma}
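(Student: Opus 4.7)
The natural approach is structural induction on $\varphi$, exploiting the crucial fact that $\sigmaY{-1}$ assigns $\bot$ to every proposition in $\Sphi$ by construction. This is precisely designed to mimic the semantic fact that at instant $0$ there is no previous instant, so every formula whose truth depends on the past must fail there.

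For the base case $\varphi = a \in \P$, the equivalence is immediate: $\s_0 \models a$ iff $a \in \s_0$, which is exactly the first clause of $\val$. The boolean cases $\lnot\psi$ and $\psi_1 \land \psi_2$ follow straight from the induction hypothesis applied to their immediate subformulas. The two interesting cases are the temporal ones, and both hinge on $\sigmaY{-1}$ being identically $\bot$.

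For $\varphi = \Yesterday\psi$, the \PLTLf semantics require $i \geq 1$, which fails at $i=0$, so $\s_0 \not\models \Yesterday\psi$. On the computational side, $\val(\Yesterday\psi,\sigmaY{-1},\s_0)$ holds iff $\sigmaY{-1}\models \quoted{\psi}$; but $\quoted{\psi}\in\Sphi$ (since $\Yesterday\psi$ is a subformula of $\varphi$, witnessing membership) and $\sigmaY{-1}$ sends it to $\bot$, so $\val$ returns false as required. For $\varphi = \varphi_1 \Since \varphi_2$, the semantics at $i=0$ force $k=0$ (the only index with $0 \le k \le 0$), with the inner universal quantifier over $j$ vacuous, so $\s_0\models \varphi_1\Since\varphi_2$ iff $\s_0\models \varphi_2$. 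On the $\val$ side, the second disjunct $\val(\varphi_1,\sigmaY{-1},\s_0) \land \sigmaY{-1}\models\quoted{\varphi_1\Since\varphi_2}$ collapses to false (again because $\sigmaY{-1}$ is uniformly $\bot$ on $\Sphi$), leaving only $\val(\varphi_2,\sigmaY{-1},\s_0)$, which by the induction hypothesis equals $\s_0\models\varphi_2$.

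There is no serious obstacle; the mild delicate point is simply keeping the bookkeeping straight between the two roles of $\varphi$, namely as the top-level formula (which fixes $\Sphi$) versus as the current subformula under induction. One must observe that whenever a case invokes $\sigmaY{-1}\models\quoted{\phi}$, the proposition $\quoted{\phi}$ is guaranteed to lie in $\Sphi$ by the definition of $\Sphi$ on subformulas, so the uniform assignment $\sigmaY{-1}(\quoted{\phi})=\bot$ applies. With this observation the induction closes cleanly.
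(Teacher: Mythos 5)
Your proof is correct and follows essentially the same route as the paper's: structural induction on $\varphi$, with the boolean cases delegated to the induction hypothesis and the temporal cases resolved by the fact that $\sigmaY{-1}$ is identically $\bot$ on $\Sphi$. If anything, your treatment of the $\Since$ case is the more precise one --- the paper asserts $\s_0\not\models\varphi_1\Since\varphi_2$ outright, which fails whenever $\s_0\models\varphi_2$ (the semantics admit $k=i=0$ with a vacuous universal quantifier), whereas you correctly observe that both the semantic side and the $\val$ side reduce to the truth of $\varphi_2$ at $\s_0$, so the equivalence still closes.
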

\begin{proof}.
By structural induction on the formula $\phi$.
\begin{itemize}
    \item $\phi=p$. By definition of $\val(\cdot)$, $\val(p, \sigmaY{0}, \s_0) \tiff \s_0 \models p$. 
    
    \item $\phi = \Yesterday \phi'$. By definition of $\sigmaY{0}$, $\sigmaY{0}(\quoted{\Yesterday \phi'}) = \bot$, and by the semantics, $\s_0\not\models\Yesterday\phi'$. Therefore, the thesis holds.
    
    \item $\phi = \phi_1 \Since \phi_2$. 
    $\val(\phi_1\Since\phi_2, \sigmaY{i}, \s_i) \tiff 
    \val(\phi_2, \sigmaY{i}, \s_i)   \lor (\val(\phi_1, \sigmaY{i}, \s_i) \land \sigmaY{i}\models \quoted{\Yesterday(\phi_1\Since\phi_2)})$.
    By definition of $\sigmaY{0}$, $\sigmaY{0}(\quoted{\Yesterday(\phi_1 \Since \phi_2})) = \bot$, hence the formula above simplifies to $\val(\phi_2, \sigmaY{i}, \s_i)$. On the other hand,  by the semantics, $\s_0\models\phi_1 \Since \phi_2 \tiff s_0\models \phi_2$. Hence, by induction the thesis holds.

    \item $\phi=\phi_1 \land \phi_2$ or $\phi=\lnot \phi'$. The thesis holds by structural induction.
\end{itemize}
\end{proof}

% Recalling that, in general, we use $\slast(\rho)$ to denote the \emph{second last} element of a trace $\rho$ and $\last(\rho)$ its \emph{last} element, we can give the following result.
% $\slast(\rho)$ to denote the \emph{second last} element of a trace $\rho$ and 
% Recalling that, in general, we use $\last(\trace)$ to denote the \emph{last} element of a trace $\trace$, we can give the following result.
Next, we extend the previous result to all traces of any length.
\begin{theorem}
\label{th:progression}
Let \PLTLf formula $\varphi$ over $\P$, $\phi\in\sub(\varphi)$ a subformula of $\varphi$, $\trace$ a trace over $\P$, and $\tracePlus$ the corresponding  trace   over $\Sphi$. Then
\[\trace \models \phi \tiff \val(\phi,\last(\tracePlus),\last(\trace)).\]
\end{theorem}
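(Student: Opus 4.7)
The plan is to derive the theorem by induction on the position $i$ in the trace, strengthening the claim so that the inductive hypothesis is applicable to every subformula of $\varphi$ at every previous position. Concretely, I would prove the following stronger statement: for every $0 \le i \le n$ and every $\psi \in \sub(\varphi)$, $\trace, i \models \psi \tiff \val(\psi, \sigmaY{i-1}, \s_i)$. The theorem is then the instance $i = n$ and $\psi = \varphi$, because $\last(\tracePlus) = \sigmaY{n-1}$ and $\last(\trace) = \s_n$.

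First I would dispatch the base case $i = 0$, which is exactly Lemma~\ref{lem:ev} applied to the prefix $\s_0$, since $\sigmaY{-1}$ sends every proposition in $\Sphi$ to $\bot$ by construction. For the inductive step at position $i \ge 1$, I would fix $i$, assume the strengthened claim at every position $< i$ (the outer IH), and do a structural induction on $\psi$ (the inner IH). The atomic, negation, and conjunction cases follow immediately by unfolding Definition~\ref{def:val} and using the inner IH on the immediate subformulas.

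The two interesting cases are $\Yesterday$ and $\Since$, and they are where the outer IH pays off. For $\psi = \Yesterday \phi$, by Definition~\ref{def:val} $\val(\Yesterday\phi, \sigmaY{i-1}, \s_i)$ holds iff $\sigmaY{i-1} \models \quoted{\phi}$. By construction of $\tracePlus$, $\sigmaY{i-1}(\quoted{\phi}) = \val(\phi, \sigmaY{i-2}, \s_{i-1})$, which by the outer IH at position $i-1$ is equivalent to $\trace, i-1 \models \phi$, matching the \PLTLf semantics of $\Yesterday$. For $\psi = \phi_1 \Since \phi_2$, I would combine the inner IH (to replace $\val(\phi_j, \sigmaY{i-1}, \s_i)$ by $\trace, i \models \phi_j$) with the outer IH plus the construction of $\tracePlus$ (to replace $\sigmaY{i-1}\models\quoted{\phi_1 \Since \phi_2}$ by $\trace, i-1 \models \phi_1 \Since \phi_2$), and then appeal to the standard unfolding identity $\trace, i \models \phi_1 \Since \phi_2 \tiff \trace, i \models \phi_2 \lor (\trace, i \models \phi_1 \land \trace, i{-}1 \models \phi_1\Since\phi_2)$, which is valid for $i \ge 1$ and whose right disjunct is consistent with $\sigmaY{-1}$ being uniformly $\bot$ at the base.

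The main obstacle is not any single case but keeping the two inductions cleanly separated: the structural induction at a fixed position only yields the claim for subformulas evaluated with the same $\sigmaY{i-1}$ and $\s_i$, whereas the $\Yesterday$ and $\Since$ cases force us to invoke the claim one position earlier. Once the outer induction on $i$ is in place and the construction of $\tracePlus$ is unfolded to identify $\sigmaY{i-1}(\quoted{\phi})$ with $\val(\phi, \sigmaY{i-2}, \s_{i-1})$, everything else is a routine verification against the \PLTLf semantics and Definition~\ref{def:val}.
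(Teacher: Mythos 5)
Your proposal is correct and follows essentially the same route as the paper: a double induction (outer on the trace position, inner on formula structure) with Lemma~\ref{lem:ev} handling the base case and the $\Yesterday$/$\Since$ cases resolved by unfolding the construction of $\tracePlus$ against the semantics. Your explicit strengthening of the claim to all subformulas at all positions is in fact the cleaner statement of the induction hypothesis that the paper's proof implicitly relies on when it invokes $\sigmaY{n-1}\models\quoted{\psi}$ for proper subformulas $\psi$.
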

\begin{proof}
We prove the thesis by double induction on the length of the trace $\trace$ and on the structure of the formula $\phi$.
\begin{itemize}
    % \item Base case: $\tau = \epsilon$. Then, by Lemma~\ref{lem:ev} we have that $\epsilon\models\varphi \tiff \ev(\varphi)=\top$. 
    % Then, $\ev(\varphi)=\top \tiff
    % \last(\trace^+)\models\quoted{\varphi}$,
    % because 
    % $\last(\trace^+) = \last(\sigmaPhi{0}) = \sigmaPhi{0}$ 
    % and by definition of $\sigmaPhi{0}$. Therefore, the thesis holds.
    
    \item Base case: $\trace = \s_0$. By Lemma~\ref{lem:ev}, the thesis holds.
    
    \item Inductive step: Let $\trace=\traceInd$. By inductive hypothesis, the thesis holds for the trace $\trace_{n-1}$ of length $n-1$: 
    $$
    \trace_{n-1} \models \phi \tiff \val(\phi, \last(\trace_{n-1}^\Plus), \last(\trace_{n-1})) 
    % \last(\s_{n-1}^+)\models\quoted{\varphi}
    % , \forall \psi\in\cl(\varphi)
    $$
    Now, we prove that the thesis holds also for $\traceInd$:
    % , with $\sigma\in\Sp$: 
    $$
    \traceInd \models \phi \tiff \val(\phi, \last((\traceInd)^\Plus), \last(\traceInd))
    % \last((\trace\sigma)^+)\models\quoted{\varphi}
    % , \forall \psi\in\cl(\varphi)
    $$
    
    % Let $s=\prog(s_0,\trace)$.
    To prove the claim, we now proceed by structural induction on the formula, knowing that $\last((\traceInd)^\Plus) = \sigmaY{n}$ and $\last(\traceInd) = \s_n$:
    
    \begin{itemize}
        \item $\phi=p$. We have that $\traceInd \models p \tiff \s_n\models p$. For the $\val(\cdot)$ predicate we have that $\s_n\models p \tiff \val(p, \sigmaY{n}, \s_n)$.
        Therefore, the thesis holds.
        
        \item $\phi=\Yesterday\phi'$. We have that $\last(\traceInd)\models \Yesterday\phi' \tiff \last(\trace_{n-1})\models \phi'$.
        By inductive hypothesis, $\trace_{n-1} \models \phi' \tiff \val(\phi', \last(\trace_{n-1}^\Plus), \last(\trace_{n-1}))$. For the $\val(\cdot)$ predicate $\val(\Yesterday\phi',\sigma_n,s_n) \tiff \sigma_n \models \quoted{\Yesterday\phi'}$, which in turn is defined as $\val(\phi',\last(\trace_{n-1}^\Plus), \last(\trace_{n{-}1}))$. Hence the thesis holds.

        \item $\phi=\phi_1\Since \phi_2$. In this case it suffices to remember that $s_n\models \phi_1\Since \phi_2$ iff $s_n\models \phi_2 \lor (\phi_1 \land \Yesterday(\phi_1\Since\phi_2))$.  
        On the other hand, $\val(\phi_1\Since\phi_2, \sigmaY{n}, \s_n)$ iff $\val(\phi_2, \sigmaY{n}, \s_n) \lor (\val(\phi_1, \sigmaY{n}, \s_n) \land \sigmaY{n}\models \quoted{\Yesterday(\phi_1\Since\phi_2)})$.
        By structural induction we have that $s_n\models \phi_1 \tiff \val(\phi_1, \sigmaY{n}, \s_n)$, and
        $s_n\models \phi_2 \tiff \val(\phi_2, \sigmaY{n}, \s_n)$. Moreover $s_n\models \Yesterday(\phi_1\Since\phi_2)$ iff $\last(\trace_{n{-}1})\models \phi_1\Since\phi_2$, and 
        $\sigma_n\models \quoted{\Yesterday(\phi_1\Since\phi_2)}$ iff 
        $\val(\phi_1\Since\phi_2,\last(\trace_{n{-}1}^\Plus),\last(\trace_{n{-}1}))$. Finally, $\last(\trace_{n{-}1})\models \phi_1\Since\phi_2$ iff  $\val(\phi_1\Since\phi_2,\last(\trace_{n{-}1}^\Plus),\last(\trace_{n{-}1}))$ holds by induction on the length of the trace.

    \item $\phi=\phi_1 \land \phi_2$ or $\phi=\lnot \phi'$. The thesis holds by structural induction.

    \end{itemize}
\end{itemize}
\end{proof}

\noindent
Theorem~\ref{th:progression} gives us the bases of our technique. Specifically, it guarantees that by keeping suitably updated $\sigmaY{}$, we can evaluate our \PLTLf goal only using the propositional interpretation in the current instant and the truth value of the (quoted) yesterday formulas in $\sigmaY{}$, instead of considering the entire trace.
\section{Handling \PLTLftitle Goals in PDDL}
\label{sec:pddl}

In this section, we exploit Theorem~\ref{th:progression} above to devise a new approach for classical and \FOND planning for \PLTLf goals.
% , that we call \emph{progression-based}. 
The key idea behind our approach is that, given a \PLTLf formula and a planning domain, instead of computing the  automaton for the \PLTLf goal $\varphi$ and then building the cross-product between such an automaton and the automaton corresponding to the domain, as done, e.g., in \cite{baier2006planning,torres2015polynomial,camacho2017nondeterministic,camacho2018finite,degiacomo2018automata}, we simply keep track of the values of the formulas in $\sigmaY{}$ during the search process for a plan/strategy.
% This would allow us to potentially save a lot of computational resources, as in most cases only part of the goal automaton is necessary to find a solution for a given  planning task.

% to explore its relevant part by means of formula progression. 
% This would allow us to potentially save a lot of computational resources, as in most cases only a subset of the goal \DFA is required to find a solution for an instance of the \FOND planning task.
We present a compilation of \PLTLf goal formulas in \PDDL that works for both classical and \FOND planning (with and without stochastic fairness). Hence, in this section  we generically refer to  planning problems,  possibly with nondeterministic actions effects.

In the planning literature, e.g., \citep{baier2006planning,torres2015polynomial,camacho2017nondeterministic,camacho-strong-19}, solving planning for temporally extended goals is done in three steps. The first step consists in the compilation of the original planning problem $\Gamma$ involving the temporally extended goal into a planning problem $\Gamma^\prime$ for standard reachability goals. Step two concerns the invocation of a sound and complete planner, as, e.g., \fastdownward~\citep{helmertFD} and \mynd~\citep{mattmuller2010pattern}, to compute a plan/strategy solving the compiled problem $\Gamma^\prime$. Finally, in the third step, the computed plan/strategy is reworked (in a polynomial way) to get the solution for the original problem $\Gamma$. The  advantage of such an approach is that once temporal goals have been compiled away, one can leverage any off-the-shelf planner to actually solve the task.
Here, we follow a similar process. However, instead of encoding the dynamics of the automata corresponding to the temporally extended goals into \PDDL, as in the aforementioned works, we exploit Theorem~\ref{th:progression} to do the compilation in the first step. Furthermore, we will not introduce any extra control action, thus our step three trivializes. 
% Indeed, the framework in Section~\ref{sec:progression} ensures that, at a given instant of time $i$, we only need to know the truth assignments of subformulas in $\Sphi$ holding at the previous instant ${i-1}$ to check the truth of formula. Thus, in our compilation, we can avoid the use of auxiliary actions related to the temporal goal formula.

Given a planning problem $\Gamma = \tup{\D, s_0, \varphi}$, where $\D=\tup{2^{\F}, A, \alpha, tr}$ is a planning domain, $s_0$ the initial state and $\varphi$ a \PLTLf goal, the compiled planning problem is $\Gamma^\prime = \tup{\D^\prime, s^\prime_0, G^\prime}$, where $\D^\prime=\tup{2^{\F^\prime}, A, \alpha^\prime, tr^\prime}$ is compiled planning domain, $s^\prime_0$ the new initial state and $G^\prime$ is new  reachability goal. 
% Intuitively, auxiliary parts of $\Gamma^\prime$ are used to synchronize the domain and the values of subformulas in $\Sphi$. As previously mentioned, we do not alternate between a \emph{world} phase and a \emph{synchronization} phase, as specified in previous encodings~\citep{baier2006planning,torres2015polynomial,camacho2017nondeterministic,camacho-strong-19}.
% between the domain's actions, which we call the \emph{acting phase}, and a system synchronization action that simulates formula progression, which we call the \emph{synchronizing phase}.
%%
Specifically, $\Gamma^\prime$ is composed by the following components.
\paragraph{Fluents}
$\F^\prime$ contains the same fluents of $\F$ and it is augmented with one fluent for each proposition $\quoted{\Yesterday\phi}$ in $\Sphi$ to keep track of propositional interpretations $\sigmaY{i}$.
%\begin{itemize}
    % \item one fluent for each subformula $\phi$ in $\sub(\varphi)$ to represent the $\val(\phi, \sigmaY{}, \trace_i)$ predicates of Definition~\ref{def:val};
%     \item one fluent for each $\val(\phi, \sigmaY{i-1}, \trace_i)$ predicate of Definition~\ref{def:val}. The total number of these predicates is equal to the number of subformulas $\phi$ in $\sub(\varphi)$.
%     % \item and one additional fluent, called \pred{act}, to switch between the acting phase and the synchronizing phase.
% \end{itemize}
% Formally, $\F^\prime = \F \cup \{ \quoted{\phi} \mid \quoted{\phi} \in \Sphi \} \cup \{ \val(\phi, \sigmaY{i-1}, \trace_i) \mid \phi \in \sub(\varphi) \} \cup \{\pred{act}\}$.
Formally, $\F^\prime = \F \cup \{ \quoted{\Yesterday\phi} \mid \quoted{\Yesterday\phi} \in \Sphi \}$.
%\cup \{ \val(\phi, \sigmaY{i-1}, \trace_i) \mid \phi \in \sub(\varphi) \}$.

\paragraph{Initial State}
The initial state is the same of the original problem $\Gamma$ for the original fluents in $\F$, whereas the new fluents $\quoted{\Yesterday\phi} \in \Sphi$ are assigned to the truth value given by $\sigmaY{0}$. That is $s'_0 = (\sigmaY{0},s_0)$.

\paragraph{Derived Predicates}
We make use of \emph{derived predicates} (aka axioms)~\citep{hoffmann2005deterministic}, which are nowadays natively supported by most state-of-the-art planners.
In particular, we include a derived predicate $\val_\phi$ for every subformula $\phi\in \sub(\varphi)$. 
These predicates are intended to be such that the current state $(\sigmaY{i},\s_i)\models \val_\phi$ iff $\val(\phi,\sigmaY{i},s_i)$. To do so, mimicking the rules in Definition~\ref{def:val}, we define the following derivation rules:

\begin{itemize}
    \item $\val_{a}\gets a$;
    \item $\val_{\Yesterday\phi} \gets \quoted{\Yesterday\phi}$;
    \item $\val_{\phi_1\Since\phi_2} \gets (\val_{\phi_2}  \lor (\val_{\phi_1} \land \quoted{\Yesterday(\phi_1\Since\phi_2)}))$;
    \item $\val_{\phi_1 \land \phi_2} \gets (\val_{\phi_1} \land \val_{\phi_2})$;
    \item $\val_{\lnot\phi} \gets \lnot\val_{\phi}$.
\end{itemize}
It is immediate to see that indeed we have that $(\sigmaY{i},s_i)\models \val_\phi$ iff $\val(\phi,\sigmaY{i},s_i)$.

% To compute the new true assignments $\sigmaY{}$ at each planning state, we need a way to encode $\val(\phi, \sigmaY{i-1}, \trace_i)$ predicate rules, described in Definition~\ref{def:val}, in \PDDL. If we see every rule as of the form $\val(\phi, \sigmaY{i-1}, \trace_i) \gets \mathsf{cond}(\phi)$, with $\mathsf{cond}(\phi)$ being the condition for $\val(\phi, \sigmaY{i-1}, \trace_i)$ to evaluate to true, then a clever encoding of all such rules defined for every subformula $\phi\in\sub(\varphi)$ is to use \emph{derived predicates} (aka axioms)~\citep{hoffmann2005deterministic}, which are nowadays natively supported by most state-of-the-art planners.
% The condition of every predicate $\val(\phi, \sigmaY{}, \trace_i)$'s rule is encoded as a \emph{derived predicate}, which is a natively supported \PDDL feature by most state-of-the-art planners. 
As a result, the set of  derived predicates in $\Gamma^\prime$, denoted as $\F^\prime_{der}$, comprises the set of derived predicates $\F_{der}$ in the original problem $\Gamma$ plus a new derived predicates $\val_\phi$ for every subformula $\phi$ in $\sub(\varphi)$, i.e.,  $\F^\prime_{der} = \F_{der}\cup \{ \val_{\phi}\mid \phi \in \sub(\varphi) \}$.
% Along with these new derived predicates, we have the corresponding derivation rules described above.

We highlight that the use of derived predicates allows us to elegantly model the mathematics of Section~\ref{sec:progression} (i.e., the $\val(\phi, \sigmaY{i}, \s_i)$) and are often convenient when dealing with more sophisticated forms of planning (see, e.g., \cite{borgwardt2022expressivity}). 
They also simplify the action schema and the goal descriptions, without introducing control predicates among the fluents, and hence without affecting the search too much, as shown in~\cite{thiebaux2005defense}. 

\paragraph{Domain Actions}
Every domain's action in $A$ is modified on its effects by adding a way to update the assignments of propositions in $\Sphi$. The update of assignments can be modeled by a set of conditional effects (for each $\quoted{\Yesterday\phi} \in \Sphi$) of the form:
\[\begin{array}{lcl}
\val_\phi &\to& \quoted{\Yesterday\phi}\\
\lnot \val_\phi &\to& \lnot\quoted{\Yesterday\phi}
\end{array}\]
Note that these effects are exactly the same for every action $a\in A$. 
Also, since $\sigmaY{i}$ maintains values of $\quoted{\Yesterday\phi}$ in $\Sphi$ they are independent of the effect of the action on the original fluents, which, instead, is maintained in the propositional interpretation $\s_i$.
This means that we can compute the next value of $\sigmaY{}$ without knowing neither which action has been executed nor which effect such an action has had on the original fluents. 

Formally, let 
$
\Val = \{ \val_{\phi} \to \quoted{\Yesterday\phi}, \lnot \val_{\phi} \to \lnot \quoted{\Yesterday\phi}\mid \quoted{\Yesterday\phi} \in \Sphi \}.
$
% This set of conditional effects $\Val$ is computed just once for all actions $a\in A$, and is added to every action $a$ in conjunction with the original effects of $a$. Intuitively, every domain action will have its own effects plus a set of conditional effects referring to the \PLTLf goal.
The set of actions $A$ in $\Gamma'$ remains the same, as in the original problem $\Gamma$. For all $a \in A$, we have that the precondition in $\Gamma'$ are $\Pre'_{a} = \Pre_a$ and the effect in $\Gamma'$ are $\Eff'_{a} = \Eff_a \cup~\Val$.
Note that, the auxiliary part $\Val$ in $\Eff'_{a}$ \emph{deterministically} updates subformulas values in $\Sphi$, without affecting any fluent $f\in\F$ of the original domain model.
This is crucial to the encoding correctness.

% \paragraph{Progression Action}
% During the synchronizing phase the only available executable action is the progression action, which we call $\pred{prog}$. Such an action allows us to progress the assignments of propositions in $\Sphi$ at each instant of time. The precondition of $\pred{prog}$ is simply $\Pre_\pred{prog} = \{ \lnot \pred{act} \}$, while its effects are defined by conditional effects of the form $\mathsf{cond}(\phi) \to \mathsf{eff}(\phi)$ as follows:
% $$
% \Eff_\pred{prog} = \{ \val(\phi, \sigmaY{i-1},\trace_i) \to \quoted{\phi} \mid \quoted{\phi} \in \Sphi \} \cup \{ \pred{act}\}
% $$
% Here, note that the auxiliary action $\pred{prog}$ is deterministic and does not affect any fluent $f\in\F$ of the original domain model.

\paragraph{Goal}
The  goal in $\Gamma^\prime$ is specified as $G^\prime = \{ \val_{\varphi}\}$. That is, we want that the $\val(\varphi, \sigmaY{n},\s_n)$, corresponding to the original \PLTLf goal formula $\varphi$, holds true at the last instant, so as to exploit Theorem~\ref{th:progression}.
% , and, at the same time, that $\sigmaY{}$ is updated to evaluate the $\val(\varphi, \sigmaY{n-1},\trace_n)$ predicate on the right formula assignments.

It is easy to see that our compilation is polynomially related to the original problem.
\begin{theorem}
The size of the compiled planning problem $\Gamma^\prime$ is polynomial in the size of the original problem $\Gamma$. In particular, the additional fluents introduced are linear in the size of the temporally extended \PLTLf goal $\varphi$ of $\Gamma$ (in fact, in the number of $\varphi$'s subformulas of the form $\Yesterday\phi$ and $\phi_1\Since \phi_2$). 
% The \PDDL compilation introduced above introduces an overhead wrt the original \PDDL which is polynomial in the size of the goal formula $\varphi$.
\end{theorem}
% \todo[inline]{check proof}
\begin{proof}
Immediate, by analyzing the construction.
% The size of the new planning problem $\Gamma^\prime$ is given by $\mid \F^\prime \mid = \mid \F \cup \sub(\varphi) \cup \Sphi \mid$. 
% % And, all components referring to the goal formula $\varphi$ are polynomial in the size of the length of the formula itself.
% Both $\sub(\varphi)$ and $\Sphi$ are of size polynomial in the size of the length of the formula $\varphi$ (in fact, linear).
\end{proof}

Next, we turn to correctness.
Let  $\Gamma=(\D,s_0,\varphi)$ be a planning problem, where $\D$ is a (possibly nondeterministic) domain, $s_0$ is the initial state, and $\varphi$ is  a \PLTLf goal formula, and let $\Gamma'=(\D',s'_0,G')$ be the corresponding compiled planning problem as previously defined.

Any trace $\trace'=s'_0,\ldots, s'_n$ on $\D'$ can be seen as $\trace'=zip(\tracePlus,\trace)$,  where
$\trace = \s_0,\dots,\s_n \in (2^\F)^+$, $\tracePlus=\sigmaY{0},\ldots,\sigmaY{n} \in (2^\Sphi)^+$,
where each element of $\trace^{\prime}$ is of the form $\s'_i= (\sigma_{i}, \s_i)$ for all $i \geq 0$.
Given a trace $\trace'=s'_0,\ldots, s'_n$ on the compiled planning domain $\D'$, there exist a single trace $\trace'\mid_{\F}=\trace = \s_0,\dots,\s_n$ on the original planning domain $\D$. 
Conversely, given a trace $\trace = \s_0,\dots,\s_n$ on the original planning domain $\D$, there exists a unique corresponding trace $\tracePlus$, and hence a single $\trace'= zip(\tracePlus,\trace)$ on the compiled domain $\D'$. 

For every strategy $\policy: (2^\F)^+ \to A$ for the planning problem $\Gamma$ with \PLTLf goal $\varphi$, we can build the strategy $\policy^\prime: (2^{\F^\prime})^+ \to A$ for $\Gamma^\prime$ as follows:
\[\begin{array}{lcl}
\policy^\prime(\trace') = a &\tiff& \policy(\trace'\mid_\F) =a \\
\policy^\prime(\trace') \mbox{ is undefined} &\tiff&  \policy(\trace'\mid_\F) \mbox{ is undefined}.\\
\end{array}
\]
\begin{lemma}\label{th:original2compiled}
If $\policy: (2^\F)^+ \to A$ is a winning strategy for the \FOND planning problem $\Gamma$ with \PLTLf goal $\varphi$, then $\policy^\prime: (2^{\F^\prime})^+ \to A$, defined as above, is a winning strategy for compiled planning problem $\Gamma^\prime$.
\end{lemma}
\begin{proof}
Strategy $\policy$ is winning if every generated execution $\trace$ (that is stochastic fair, for strong-cyclic solutions) is finite, i.e., $\policy(\trace)$ is undefined, and  such that $\last(\trace)\models \varphi$.
Correspondingly, the strategy $\policy^\prime$ induces the finite generated execution $\trace'=zip(\tracePlus,\trace)$.
Then, $\val(\varphi,\last(\tracePlus),\last(\trace))$ holds by Theorem~\ref{th:progression}, so we have that
$\last(\trace')\models \val_\varphi$. 
On the other hand, if a generated execution $\trace'$ is finite, i.e., such that $\policy^\prime(\trace')$ is undefined, then $\policy$ induces a corresponding finite generated execution $\trace = \trace'\mid_\F$. Being $\policy$ winning, it must be the case that $\last(\trace)\models \varphi$. Hence, by Theorem~\ref{th:progression}, $\last(\trace')\models \val_\varphi$.
Thus, if $\policy$ is winning for $\Gamma$, then $\policy^\prime$ is winning for $\Gamma^\prime$.
\end{proof}

Now we consider the converse.
For every strategy $\policy^\prime: (2^{\F'})^+ \to A$ for the compiled planning problem $\Gamma'$, we can build the strategy $\policy: (2^{\F})^+ \to A$ for the original problem $\Gamma$ with \PLTLf goal $\varphi$ as follows (where $\trace^\prime=zip(\tracePlus,\trace)$):
\[\begin{array}{lcl}
\policy(\trace) = a &\tiff& \policy^\prime(\trace') = a \\
\policy(\trace) \mbox{ is undefined} &\tiff& \policy^\prime(\trace') \mbox{ is undefined}.
\end{array}
\]
\begin{lemma}\label{th:compiled2original}
If $\policy^\prime: (2^{\F^\prime})^+ \to A$ is a winning strategy for compiled planning problem $\Gamma^\prime$, then $\policy: (2^\F)^+ \to A$, defined as above, is a winning strategy for the \FOND planning problem $\Gamma$ with \PLTLf goal $\varphi$.
\end{lemma}
\begin{proof}
Strategy $\policy^\prime$ is winning if every generated execution $\trace'$ (that is stochastic fair, for strong-cyclic solutions) is finite, i.e., such that $\policy^\prime(\trace')$ is undefined, and such that $\last(\trace')\models \val_\varphi$.
Correspondingly, the strategy $\policy$ induces the finite generated execution $\trace = \trace'\mid_\F$. 
Then, by Theorem~\ref{th:progression}, considering that $\val(\varphi,\last(\tracePlus),\last(\trace))$ holds, we have that $\last(\trace)\models\varphi$. 
On the other hand, if a generated execution $\trace$ is finite, i.e., such that $\policy(\trace)$ is undefined then 
$\policy^\prime$ induces a corresponding finite generated execution $\trace'=zip(\tracePlus,\trace)$.
Being $\policy^\prime$ winning, we have that 
$\last(\trace')\models \val_\varphi$. Hence, by Theorem~\ref{th:progression}, $\last(\trace)\models\varphi$.
Thus, if $\policy^\prime$ is winning for $\Gamma^\prime$, then $\policy$ is winning for $\Gamma$.
\end{proof}

By exploiting Lemma~\ref{th:original2compiled} and Lemma~\ref{th:compiled2original}, we can show the correctness of our technique.
\begin{theorem}[Correctness]
\label{th:correctness}
Let $\Gamma$ be a (classical, \FOND strong or \FOND strong-cyclic) planning problem with a \PLTLf goal $\varphi$, and $\Gamma^\prime$ be the corresponding compiled (classical, \FOND strong or \FOND strong-cyclic, resp.) planning problem with reachability goal $G^\prime$. Then, $\Gamma$ has a winning strategy $\policy: (2^\F)^+ \to A$ iff $\Gamma^\prime$ has a winning strategy $\policy^\prime: (2{^\F}^\prime)^+ \to A$.
\end{theorem}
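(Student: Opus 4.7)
The plan is to prove both directions uniformly by showing that the executions of $\policy$ in $\Gamma$ and of $\policy^\prime = \tp(\policy)$ in $\Gamma^\prime$ are in bijective correspondence via the projection $\trace^\prime \mid_{\tracePlus}$, and that goal satisfaction is preserved under this correspondence. First I would establish the bijection at the trace level. Each compiled action $a^\prime \in A^\prime$ has the same precondition $\Pre_a$ and the original effects $\Eff_a$, augmented by the auxiliary effects in $\Val$ which act \emph{only} on the new fluents $\quoted{\phi} \in \Sphi$ and are \emph{deterministic}. Consequently, the applicable actions and the nondeterministic outcomes on the original fluents $\F$ coincide in $\Gamma$ and $\Gamma^\prime$, while the evolution of the $\Sphi$-part is a deterministic function of the history of original states, starting from $\sigmaY{-1}$ in $s^\prime_0$. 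A straightforward induction on the trace length shows that the sequence of $\sigmaY{i}$ produced along any execution $\trace^\prime$ of $\Gamma^\prime$ coincides with the $\tracePlus$ constructed in Section~\ref{sec:progression} from $\trace = \trace^\prime \mid_{\tracePlus}$, and conversely every execution $\trace$ of $\Gamma$ lifts uniquely to $\trace^\prime = (\trace,\tracePlus)$.

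Next I would link this bijection to goal satisfaction through Theorem~\ref{th:progression}. A structural induction on $\phi \in \sub(\varphi)$, using the derivation rules defining the $\val_\phi$ predicates, shows that for any reachable state $(\sigmaY{i-1},\s_i)$ of $\Gamma^\prime$ we have $(\sigmaY{i-1},\s_i) \models \val_\phi$ iff $\val(\phi,\sigmaY{i-1},\s_i)$ holds in the sense of Definition~\ref{def:val}. Specialising to $\phi = \varphi$ at the final state of $\trace^\prime$, and chaining with Theorem~\ref{th:progression}, gives: $\trace^\prime$ reaches $G^\prime = \{\val_\varphi\}$ at its last state iff $\trace \models \varphi$. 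Finite-length preservation is immediate since action applicability depends only on $\Pre_a$, which is unchanged.

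I would then lift this trace-level equivalence to the strategy level across the three semantics. For classical planning, the single induced execution of $\policy$ satisfies $\varphi$ iff the single induced execution of $\tp(\policy)$ reaches $G^\prime$, and the inverse direction uses $\tnp$. For \FOND strong, the bijection between executions is outcome-preserving (since all nondeterminism in $\Gamma^\prime$ comes from $\Eff_a$), so ``every induced execution is finite and satisfies $\varphi$'' in $\Gamma$ transfers verbatim to ``every induced execution is finite and reaches $G^\prime$'' in $\Gamma^\prime$. For \FOND strong-cyclic, the same correspondence applies, but I additionally need that stochastic fairness is preserved by the lifting/projection.

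The main obstacle will be justifying this last point: one must argue that a state--action pair occurring infinitely often in a lifted trace of $\Gamma^\prime$ corresponds, via projection, to a state--action pair occurring infinitely often in $\Gamma$, and vice versa. This follows because $\sigmaY{i}$ is a deterministic function of the original history prefix, so the enlargement of the state space by $\Sphi$ can only refine, not coarsen, the recurring state-action pairs along any infinite execution, and the set of nondeterministic outcomes attached to each such pair is identical in the two problems. Hence an execution of $\Gamma$ is stochastic fair iff its lift in $\Gamma^\prime$ is, completing the proof for strong-cyclic solutions as well.
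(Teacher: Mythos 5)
Your overall route is the same as the paper's: build $\policy^\prime=\tp(\policy)$ (resp.\ $\policy=\tnp(\policy^\prime)$), observe that the auxiliary effects in $\Val$ are deterministic, touch only the $\Sphi$-fluents and leave preconditions and original effects untouched, so executions correspond one-to-one under projection, and then transfer goal satisfaction through Theorem~\ref{th:progression} via the derived predicates. You are in fact more careful than the paper on two points it leaves implicit: the induction showing that the $\Sphi$-component of a reachable state of $\Gamma^\prime$ is exactly the $\sigmaY{i}$ of $\tracePlus$, and the explicit statement that $(\sigmaY{i-1},\s_i)\models\val_\phi$ iff $\val(\phi,\sigmaY{i-1},\s_i)$. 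That part is correct and matches the intended argument.

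The one genuine gap is your final claim that an execution of $\Gamma$ is stochastic fair \emph{iff} its lift in $\Gamma^\prime$ is. The refinement observation gives only one direction: if $\trace^\prime$ is fair then, by pigeonhole over the finitely many $\Sphi$-assignments, every original pair $(s,a)$ occurring infinitely often has some refined pair $((\sigma,s),a^\prime)$ occurring infinitely often, whose outcomes project onto all of $tr(s,a)$; hence $\trace=\trace^\prime\mid_\tracePlus$ is fair. This is what the left-to-right implication of the theorem needs. The converse can fail: refining the state space can destroy fairness, because fairness of $(s,a)$ only guarantees that each successor appears infinitely often after \emph{some} occurrences of $(s,a)$, not after those occurrences sharing a fixed $\Sphi$-component $\sigma$. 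Concretely, with $tr(s_1,a)=\{s_1,s_2\}$, $tr(s_2,a)=\{s_1\}$ and $\Sphi=\{\quoted{p}\}$ tracking $\Yesterday p$ with $p$ true only in $s_1$, the execution $(s_1 s_1 s_2)^\omega$ is fair in $\Gamma$, but in its lift the pair $((\quoted{p},s_1),a^\prime)$ occurs infinitely often and is always followed by the $s_2$-outcome, so the lift is not fair. Since the right-to-left implication of the theorem needs exactly ``$\trace$ fair $\Rightarrow$ $\trace^\prime$ fair'' to invoke the winningness of $\policy^\prime$, your argument for the strong-cyclic case is incomplete as stated; it can be repaired by appealing to the probabilistic formulation of stochastic fairness (the two problems induce the same probability space over outcome sequences) or to the graph-theoretic characterization of strong-cyclic solutions, which transfers directly. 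To be fair, the paper's own proof does not address fairness at all, so on this point you went further than the paper and simply did not quite close the argument.
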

\begin{proof}
Direct consequences of  Lemma~\ref{th:original2compiled} and Lemma~\ref{th:compiled2original}.
\end{proof}

\begin{corollary}
Let $\Gamma$ be a (classical, \FOND strong or \FOND strong-cyclic) planning problem with a \PLTLf goal $\varphi$, and $\Gamma^\prime$ be the corresponding compiled (classical, \FOND strong or \FOND strong-cyclic, resp.) planning problem with reachability goal $G^\prime$. Then, every sound and complete planner (classical, \FOND strong or \FOND strong-cyclic, resp.) returns a winning strategy $\policy^\prime$ for $\Gamma^\prime$ if a winning strategy $\policy$ for $\Gamma$ exists. If no solution exists for $\Gamma^\prime$, then there is no solution for $\Gamma$.
\end{corollary}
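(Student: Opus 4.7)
The plan is to derive the corollary as a straightforward consequence of Theorem~\ref{th:correctness} combined with the definitions of soundness and completeness of a planner. Since Theorem~\ref{th:correctness} gives an ``iff'' between the existence of a winning strategy for $\Gamma$ and the existence of a winning strategy for $\Gamma'$, the corollary essentially amounts to invoking each direction of this equivalence under the right assumption on the planner.

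Concretely, I would structure the argument in two short parts. First, assume a winning strategy $\policy$ exists for $\Gamma$. By the forward direction of Theorem~\ref{th:correctness}, there exists a winning strategy $\policy'$ for $\Gamma'$ (explicitly obtainable as $\tp(\policy)$). Then completeness of the planner, when run on the reachability problem $\Gamma'$, guarantees that some winning strategy for $\Gamma'$ is returned. If desired, one can then apply $\tnp(\cdot)$ to this returned strategy to recover a winning strategy for $\Gamma$. Second, suppose the planner reports no solution exists for $\Gamma'$. By soundness, there is truly no winning strategy for $\Gamma'$. Then the contrapositive of the forward direction of Theorem~\ref{th:correctness} gives that $\Gamma$ admits no winning strategy either.

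The only mild point to be careful about is that the notions of ``winning strategy'' match up correctly between the three cases (classical, \FOND strong, \FOND strong-cyclic): this is exactly what Theorem~\ref{th:correctness} covers, since it is stated uniformly for all three, so no extra case analysis is needed here. Likewise, it should be noted that the compiled problem $\Gamma'$ is a standard reachability problem (its goal is the single derived atom $\val_\varphi$), so any off-the-shelf sound and complete planner of the appropriate class is directly applicable.

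I do not expect any real obstacle: the corollary is essentially bookkeeping on top of Theorem~\ref{th:correctness}. The ``hardest'' aspect, if any, is simply being explicit that completeness is what transports the existence of $\policy'$ into the planner actually producing it, and that soundness is what transports the planner's negative answer into genuine non-existence, so that both implications of the equivalence in Theorem~\ref{th:correctness} can be leveraged.
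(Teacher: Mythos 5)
Your proposal is correct and follows essentially the same route as the paper: both parts are read off from the equivalence in Theorem~\ref{th:correctness}, with completeness of the planner transporting existence of $\policy'$ into the planner returning it, and the negative direction obtained from the forward implication of the theorem (the paper phrases this as a contradiction, you as a contraposition plus soundness, which is the same argument).
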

\begin{proof}
If a winning strategy $\policy^\prime$ for $\Gamma^\prime$ exists, then for Theorem~\ref{th:correctness} there must be a winning strategy $\policy$ for $\Gamma$.
For the second part, suppose a sound and complete planner returns no solution for $\Gamma^\prime$, but a solution $\policy$ for $\Gamma$ does exist. This means that there will be at least an execution of $\policy$ satisfying the \PLTLf goal formula $\varphi$. However, for the completeness of the planner and for Theorem~\ref{th:correctness} there must be a corresponding winning strategy $\policy^\prime$, which contradicts the hypothesis. Therefore, the thesis holds.
\end{proof}

It is important to observe that strategies returned by a FOND planner for $\Gamma'$ are going to be ``memory-less'' policies of the form $\Pi'(s')=a$ or $\Pi'(s')$ undefined at the goal. These can be immediately transformed in trace-based strategies by defining:
\[
\begin{array}{lcl}
\policy(\trace')= a &\tiff& \Pi'(\last(\trace')) = a\\
\policy(\trace') \mbox{ is undefined} &\tiff& \Pi'(\last(\trace')) \mbox{ is undefined}.
\end{array}
\]
This possibility is essential, because the strategies for the original problem $\Gamma$ with a \PLTLf goal $\varphi$ cannot be memory-less policies, but must be memory-full strategies. In other words, they need to be finite-state controllers or transducers.
We can use the component $\sigmaY{i}$ of $s'_i=(\sigmaY{i},s_i)$  as the state of the transducer, $\sigmaY{i+1}(\quoted{\Yesterday\phi}) = \val(\phi,\sigmaY{i},s_i)$ (for each $\quoted{\Yesterday\phi}\in \Sphi$) as the factorized transition function, and $\Pi'(s'_i)$ as the output function of the transducer.

In the case of deterministic domains, we do not need these general forms of strategies and sequences of actions suffice, and these are in direct correspondence between the two domains.

% \begin{theorem}[Soundness]
% Let $\Gamma$ be a \FOND planning problem with a \PLTLf goal $\varphi$, and $\Gamma^\prime$ be the corresponding compiled \FOND planning problem with a reachability goal state. Then, solutions to $\Gamma^\prime$ are solutions to $\Gamma$.
% \end{theorem}
% \begin{proof}

% \end{proof}
% \begin{theorem}[Completeness]
% Let $\Gamma$ be a \FOND planning problem with a \PLTLf goal $\varphi$, and $\Gamma^\prime$ be the corresponding compiled \FOND planning problem with a reachability goal state. Then, if a solution exists for $\Gamma$, then one exists for $\Gamma^\prime$.
% \end{theorem}
% \begin{proof}

% \end{proof}

% \todo[inline]{To prove the correctness of our approach, we can exploit DFAs. In particular, we can show that our approach is the same as the Cartesian product between the DFA of the domain and the DFA of the formula (as per DeGiacomoRubin2018). The difference is in the DFA construction: on the one hand we have a complete minimal DFA; on the other, we build only part of the entire DFA -- the specific part that we are interested in -- but both DFAs recognize the same set of traces, which guarantees that found plans are correct.} 
\section{Experiments}
\label{sec:experiments}

We implemented the approach presented in Section \ref{sec:pddl} in a tool called \planforpast (\planforpastshort)\footnote{The tool is available online at \url{https://github.com/whitemech/planning-for-past-temporal-goals}}.
\planforpast takes in input a \PDDL domain, a \PDDL problem and a \PLTLf formula, and gives as output a compiled version of the \PDDL domain and the \PDDL problem.
Then, the planning task can be delegated to a state-of-the-art (SOTA) planner. In our experiments, we considered \fastdownward (\fastdownwardshort) \citep{helmertFD} and \mynd \citep{mattmuller2010pattern} as representative SOTA planners, which are sound and complete, for deterministic and nondeterministic domains, respectively. Combined with our compilation tool, they give the planners \planforpastfd (\planforpastfdshort for short) and \planforpastmynd (\planforpastmyndshort for short), respectively.
We used A$^*$ as algorithm for \fastdownward and LAO$^*$ as algorithm for \mynd. On both we adopt the FF heuristic. 

%\todo[inline]{revise}
% We preferred \mynd over \prp \citep{muise2012improved}, another well-known SOTA \FOND planner, because the latter is not compatible with some aspects of our implementation. In particular, \mynd natively supports the \PDDL derived predicates. We could have compiled away such derived predicates into additional actions and predicates and then use \prp; however, such a compilation can be quite expensive, as pointed out in \citep{thiebaux2005defense}.
We chose \mynd because it  natively supports \PDDL derived predicates and disjunctions in conditional effects. 
Note, however, that  derived predicates and disjunctions in conditional effects can be compiled away into additional actions and predicates, though with some overhead \citep{thiebaux2005defense}. This allows for applying our techniques to other SOTA \FOND planners like  \prp \citep{muise2012improved}.

We evaluated our compilation tool against existing compilation tools for temporally extended goals, by comparing 
% several
metrics of the performances of the planners over the compiled domains and problems.
 
% \noindent\textbf{Metrics.} The tools are compared on (1) runtime and (2) the number of expanded search nodes.

\paragraph{Baselines} We use the following baselines: 
\fondforlp (\fondforlpshort for short) \citep{fond4ltlf} and \ltlfondtofond (\ltlfondtofondshort for short) \citep{camacho2017nondeterministic,camacho-strong-19},
which are two compilers for FOND planning problems for temporally extended goals. 
\fondforlpshort supports both \LTLf and \PLTLf goals, whereas \ltlfondtofondshort only \LTLf goals. Both tools are combined with \fastdownwardshort and \myndshort, along with \planforpastshort, to do planning for temporal goals.
% \todo[inline]{la frase che segue si ripete sopra}
% As planners, we used \fastdownwardshort and \myndshort for the deterministic and nondeterministic parts of the experiments, respectively.

\paragraph{Experiment Setup} Experiments were run on a cloud-managed virtual machine, endowed with an Intel-Xeon processor running at 2.2 GHz, with 4GB of memory and 300 seconds of time limit. The correctness of \planforpastshort was also empirically verified by
comparing the results with those from all baseline tools. No
inconsistencies were encountered for all solved instances.

\paragraph{Experiment Types}
We run two types of experiments, both on deterministic and nondeterministic domains.
\begin{enumerate}
    \item \label{exp-overhead} \emph{Overhead}. In this experiment, we aim to discover the overhead introduced by our compilation technique when using the same goal of the planning problem, considered as the reachability goal $\Once(goal)$ expressed in \PLTLf. We compare the execution with a SOTA planner and the execution with the compiled version of the problem.
    \item \label{exp-comparison} \emph{Scalability over \PLTLf goals}. In this experiment, our aim is to measure the scalability of the planners over the compiled domain and problem computed by \planforpastshort, and compare it with the other compilation techniques, i.e. \fondforlpshort and \ltlfondtofondshort, using the same \PLTLf goal across planners. 
    In particular, we performed two variants:
    \begin{enumerate}
        \item[\mylabel{exp-comparison-a}{(2a)}] \label{exp-comparison-a} We fix the size of the \PLTLf goal, while scaling the size of the problem;
        \item[\mylabel{exp-comparison-b}{(2b)}]\label{exp-comparison-b} We scale the size of the \PLTLf goal (together with the problem, when needed).
    \end{enumerate}
\end{enumerate}
The \PLTLf goal formulas employed are quite common (see e.g., \citep{sohrabi2011preferred}, p. 264, col. 2): (1) one requires a set of conditions in order to occur; (2) the other requires that certain conditions were previously true in some arbitrary order (this generates a \DFA that is exponential in the number of conditions). These formulas have the advantage of being compactly translatable into the corresponding \LTLf formulas. 

For the nondeterministic part, we focus on strong-cyclic solutions since there are better tools for this kind of solution. Nevertheless, our approach applies to both strong-cyclic and strong solutions seamlessly. 
We also assume unitary cost for every domain's action $a\in A$.

\paragraph{Benchmarks}
For the deterministic part, we chose the \blocks (deterministic) and the \elevator from the 
% 2000th International Planning Competition, 
IPC-00,
whereas for the nondeterministic part we chose
\tireworld and \blocks (nondeterministic) from
% the 2006th and the 2008th International Planning Competition, 
IPC-06 and IPC-08,
respectively.\footnote{\url{https://www.icaps-conference.org/competitions/}}
For experiment \ref{exp-overhead}, we used the problems from the planning competition from which we took the domain.
For experiment \ref{exp-comparison}, we observe that there are no planning benchmarks with general \PLTLf goals,
and using the existing \LTLf goals from the literature would have been prohibitive as the best algorithm to translate from \LTLf to \PLTLf, and vice versa, is 3EXPTIME \citep{degiacomo2020pure}.
Therefore, we generated our own problems and \PLTLf goals over both deterministic and nondeterministic domains, and where the equivalent \LTLf counterpart was easy to obtain, in order to use \ltlfondtofondshort.

Moreover, it would have been interesting to make some benchmarks with the compilation tools presented in~\cite{BaierM06, torres2015polynomial} for the deterministic setting. Unfortunately, such tools are not publicly available online anymore, so we cannot fairly compare our implementation with them. Anyway, given that the compilation technique in~\cite{camacho2017nondeterministic} builds upon~\cite{BaierM06, torres2015polynomial}, we find it reasonable to assume that the performances of~\cite{BaierM06, torres2015polynomial} are analogous to the ones of~\cite{camacho2017nondeterministic}.

\paragraph{\blocks (deterministic)}
We run Experiment \ref{exp-overhead} over the 102 problems available from the planning competition. In Figure \ref{fig:blocksworld2-det},
we plot the running time of \fastdownwardshort versus \planforpastfdshort. As one can notice, the overhead introduced by our compilation technique, wrt the running time of the standard planner over the original problem, does not diverge when the problem gets harder, and the running time of the compiled planning task follows very closely the running time of the original planning task.
Regarding experiments of type \ref{exp-comparison}, we considered the number of blocks as the size of the problem, and chose a sequence goal formula parametrized with $n\ge 2$: $\varphi_n = \Once(on(b_1,b_2) \wedge \Yesterday\Once(on(b_2,b_3) \wedge \cdots \Yesterday\Once(on(b_{n-1},b_{n}))))$. Its \LTLf counterpart for \ltlfondtofondshort is simply $\varphi_n^f = \Eventually(on(b_{n-1},\\b_n) \wedge \Next\Eventually(on(b_{n-2},b_n) \wedge \cdots \Next\Eventually on(b_1, b_2))))$. The initial condition is that all the blocks are on the table and clear.
For Experiment \ref{exp-comparison-a}, we fixed the formula parameter $n=3$ and increased the number of blocks from $3$ to $20$. The results are shown in Figure \ref{fig:blocksworld1a-det}.
For Experiment \ref{exp-comparison-b}, we increased the parameter $n$ from $2$ to $20$, and the results are in Figure \ref{fig:blocksworld1b-det}.
In the former case, we note that the size of the problem does not affect the performances of our tool \planforpastfdshort and \fondforlpfdshort, except for \ltlfondtofondfdshort; and in the latter case, our tool largely outperforms its competitors, which timeout far earlier.

\begin{figure*}[!htbp]
    \centering
    \begin{subfigure}{.49\linewidth}
    \centering
    \includegraphics[width=1\textwidth]{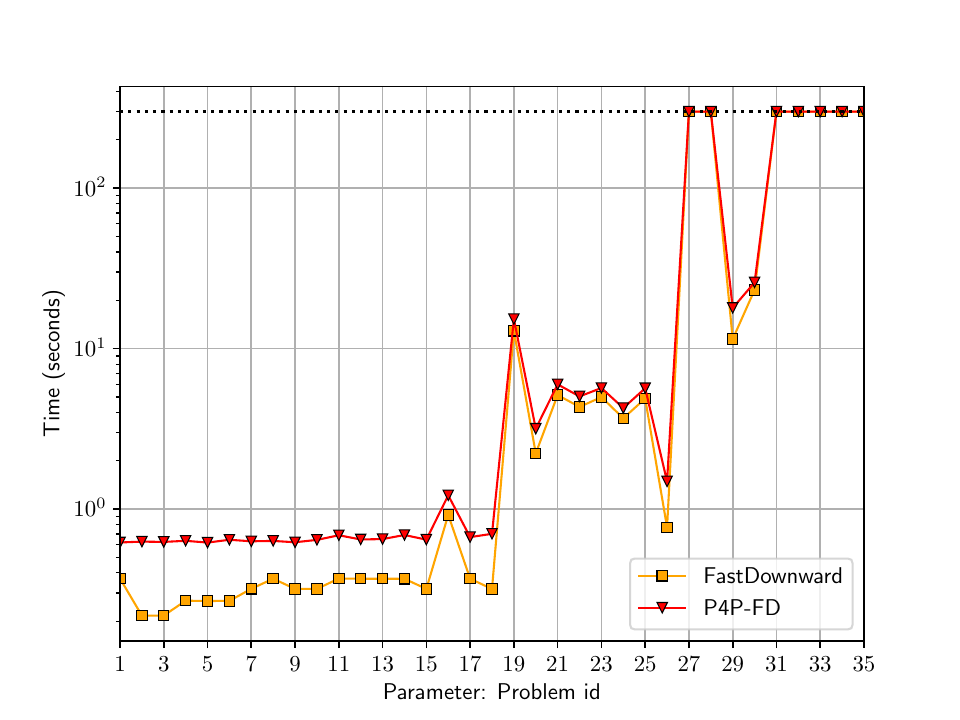}
    \caption{\scriptsize\blocks deterministic, experiment \ref{exp-overhead} (35 instances).}
    %(only first 35 instances).
    \label{fig:blocksworld2-det}
    \end{subfigure}
    \begin{subfigure}{.49\linewidth}
    \centering
    \includegraphics[width=1\textwidth]{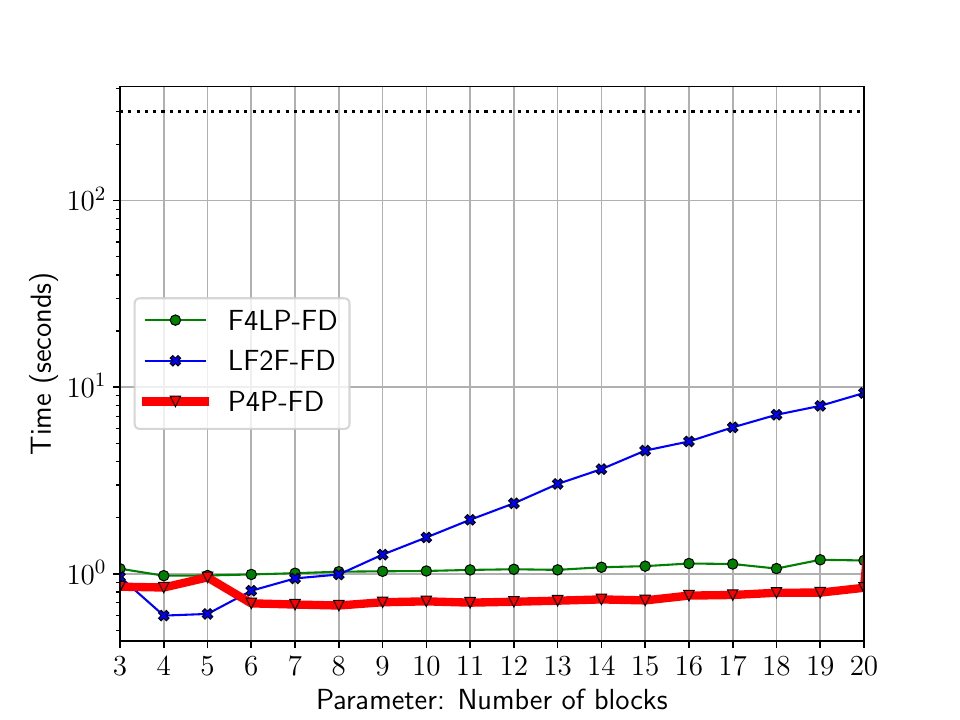}
    \caption{\scriptsize \blocks deterministic, experiment \ref{exp-comparison-a}.}
    \label{fig:blocksworld1a-det}
    \end{subfigure}
    \begin{subfigure}{.49\linewidth}
    \centering
    \includegraphics[width=1\textwidth]{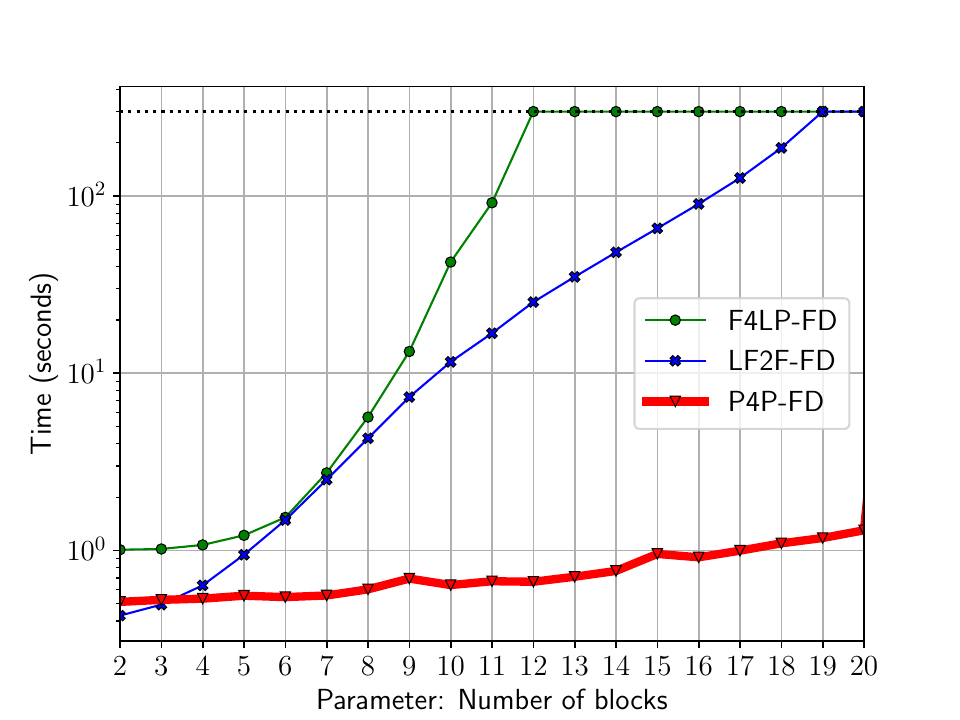}
    \caption{\scriptsize \blocks deterministic, experiment \ref{exp-comparison-b}.}
    \label{fig:blocksworld1b-det}
    \end{subfigure}
    \begin{subfigure}{.49\linewidth}
    \centering
    \includegraphics[width=1\textwidth]{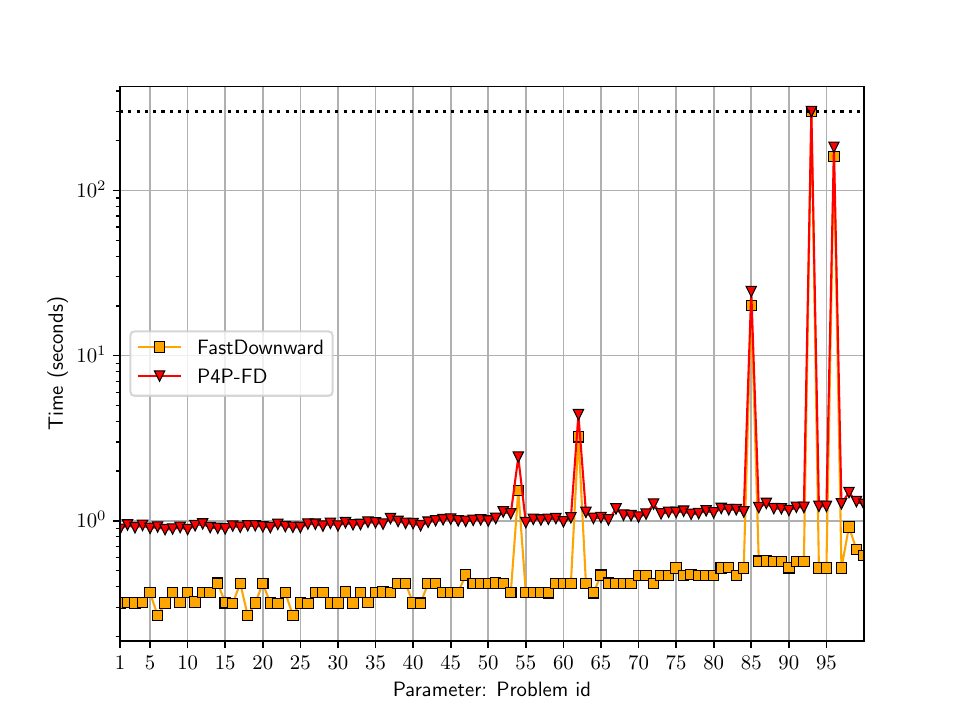}
    \caption{\scriptsize \elevator, experiment \ref{exp-overhead} (100 instances)}
    \label{fig:elevator2-det}
    \end{subfigure}
    \\
    \begin{subfigure}{.49\linewidth}
    \centering
    \includegraphics[width=1\textwidth]{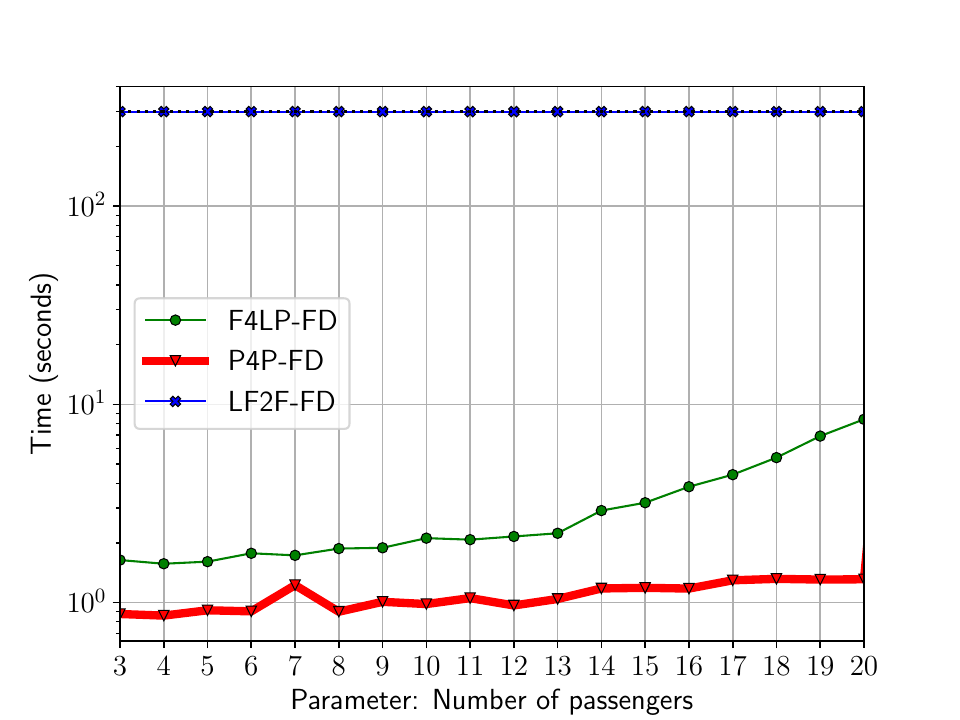}
    \caption{\scriptsize \elevator, experiment \ref{exp-comparison-a}.}
    \label{fig:elevator1a-det-20}
    \end{subfigure}
    \begin{subfigure}{.49\linewidth}
    \centering
    \includegraphics[width=1\textwidth]{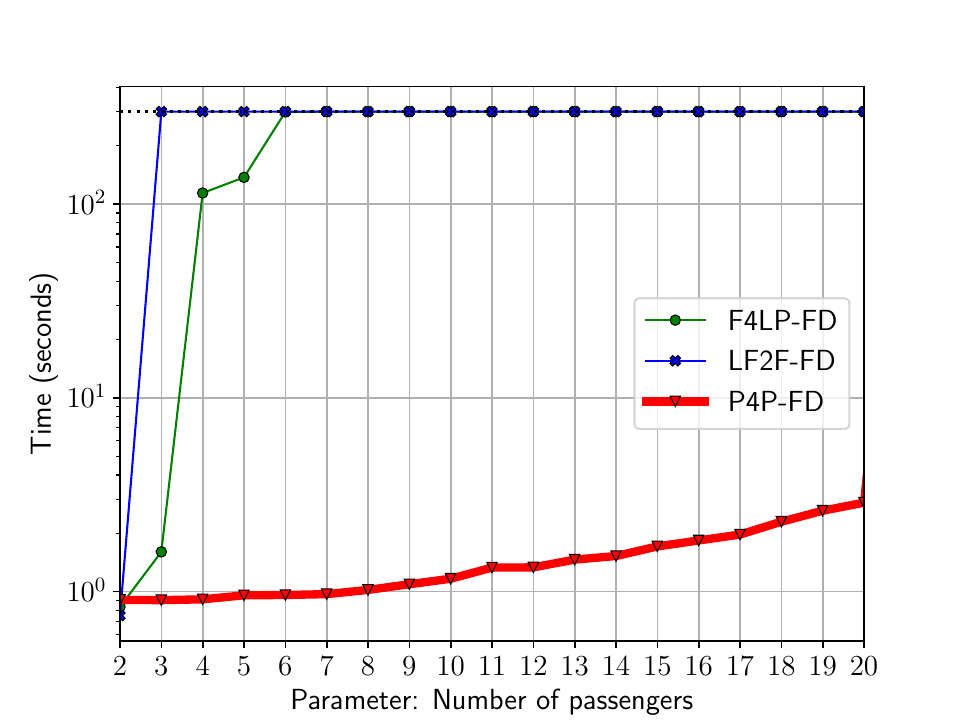}
    \caption{\scriptsize \elevator, experiment \ref{exp-comparison-b}.}
    \label{fig:elevator1b-det-20}
    \end{subfigure}
    \begin{subfigure}{.49\linewidth}
    \centering
    \includegraphics[width=1\textwidth]{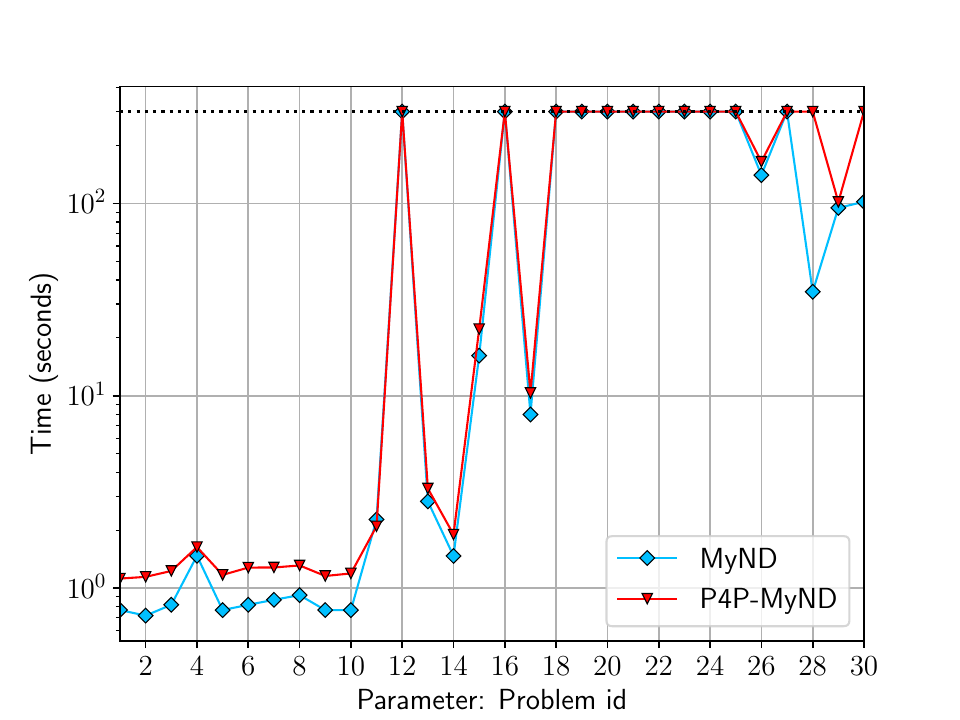}
    \caption{\scriptsize \blocksnd, experiment \ref{exp-overhead}.}
    \label{fig:blocksworld2-nondet}
    \end{subfigure}
    \begin{subfigure}{.49\linewidth}
    \centering
    \includegraphics[width=1\textwidth]{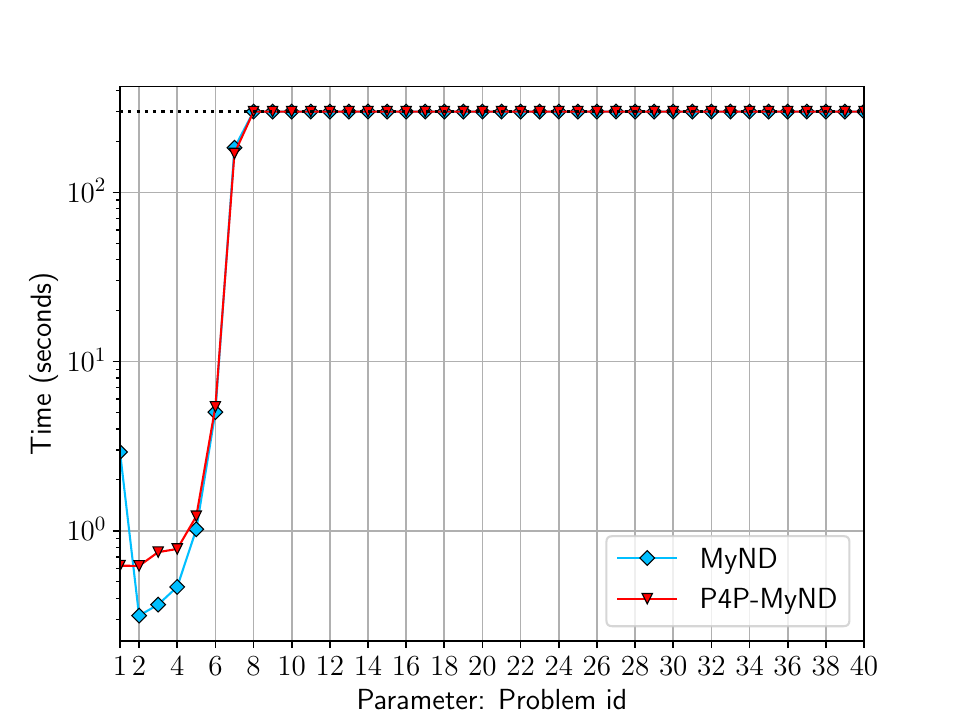}
    \caption{\scriptsize \tireworld, experiment \ref{exp-overhead}.}
    \label{fig:tireworld2-nondet}
    \end{subfigure}
    \caption{Comparison results on all benchmarks.}
\end{figure*}

% \begin{figure}
%     \centering
%     \includegraphics[width=0.49\textwidth]{images/blocksworld2-det.pdf}
%     \caption{\blocks deterministic, experiment \ref{exp-overhead} (only first 35 instances).}
%     \label{fig:blocksworld2-det}
% \end{figure}

% \begin{figure}
%     \centering
%     \includegraphics[width=0.49\textwidth]{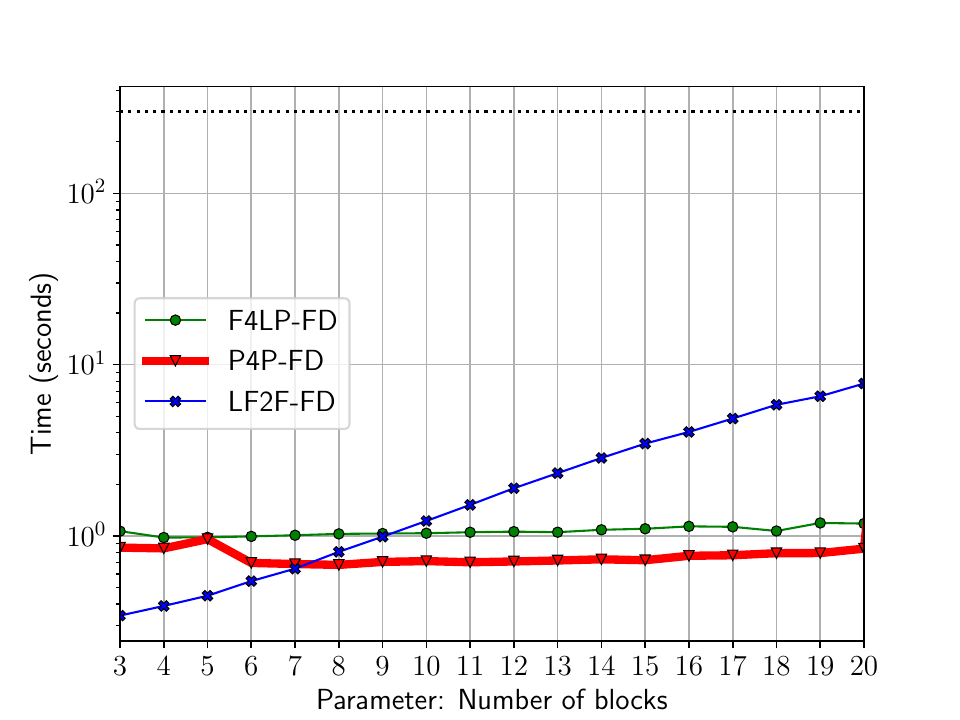}
%     \caption{\blocks deterministic, experiment \ref{exp-comparison-a}.}
%     \label{fig:blocksworld1a-det}
% \end{figure}

% \begin{figure}
%     \centering
%     \includegraphics[width=0.49\textwidth]{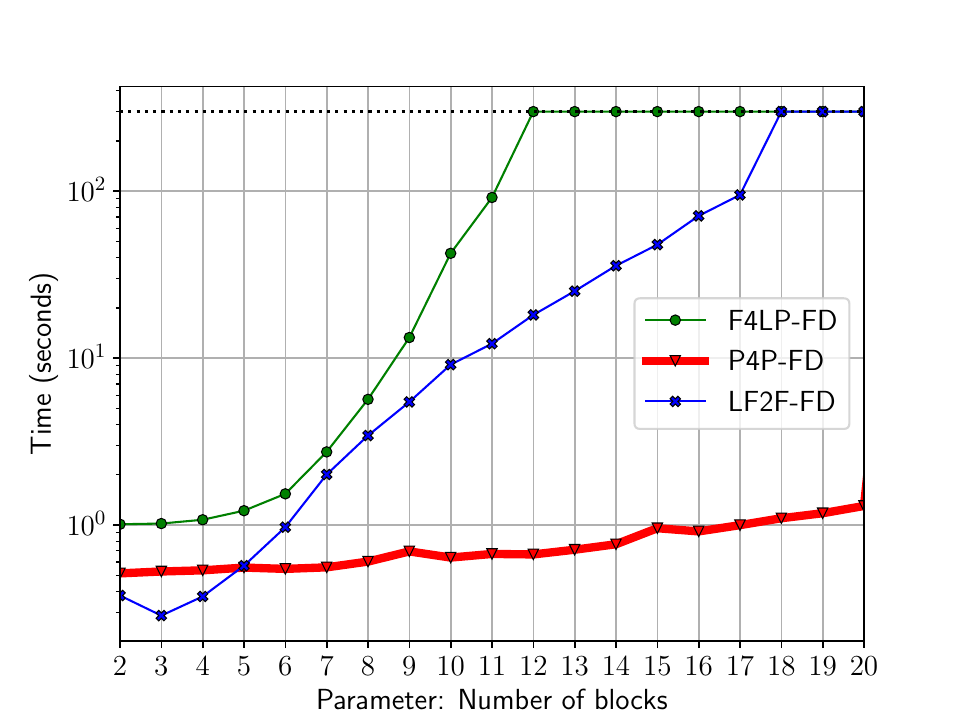}
%     \caption{\blocks deterministic, experiment \ref{exp-comparison-b}.}
%     \label{fig:blocksworld1b-det}
% \end{figure}

\paragraph{\elevator}
We run Experiment \ref{exp-overhead} over the 150 problems available from the planning competition. Figure \ref{fig:elevator2-det} shows the running time of \fastdownwardshort versus \planforpastfdshort, where we observe that we get a similar result as in \blocks (deterministic).
Regarding experiments of type \ref{exp-comparison}, we considered as size of the problem the number of passengers $n$, with $2n$ floors, all passengers starting from floor $0$ with destination for a passenger $i$ the floor $2i$. The goal formula is to serve all the passengers, i.e. $\varphi_n = \Once(served(p_1)) \wedge \cdots \Once(served(p_n))$. Its \LTLf counterpart is $\varphi^f_n = \Eventually(served(p_1))\wedge\cdots \Eventually(served(p_n))$.
For Experiment \ref{exp-comparison-a}, we fixed the formula parameter $n=3$ and increased the number of passengers from $3$ to $20$. The results are shown in Figure \ref{fig:elevator1a-det-20}.
For Experiment \ref{exp-comparison-b}, we increased the parameter $n$ from $3$ to $20$, and the results are in Figure \ref{fig:elevator1b-det-20}.
In the former case, we note that the size of the problem does not affect the performances of our tool \planforpastfdshort, instead of what happens for \fondforlpfdshort and \ltlfondtofondfdshort; and in the latter case, our tool largely outperforms its competitors, which timeout far earlier.
In fact, the planner \ltlfondtofondfdshort gets stuck in the translation step, and we conjecture it is due to the high 
bookkeeping machinery introduced to produce the compiled domain and problem.
% overhead in terms of complexity of the produced compiled domain.

% \begin{figure}
    % \centering
    % \includegraphics[width=0.5\textwidth]{images/elevator2-det.pdf}
    % \caption{\elevator, experiment \ref{exp-overhead} (only first 100 instances)}
    % \label{fig:elevator2-det}
% \end{figure}

% \begin{figure}
    % \centering
    % \includegraphics[width=0.5\textwidth]{images/elevator-1a-det.pdf}
    % \caption{\elevator, experiment \ref{exp-comparison-a}.}
    % \label{fig:elevator1a-det-20}
% \end{figure}

% \begin{figure}
    % \centering
    % \includegraphics[width=0.5\textwidth]{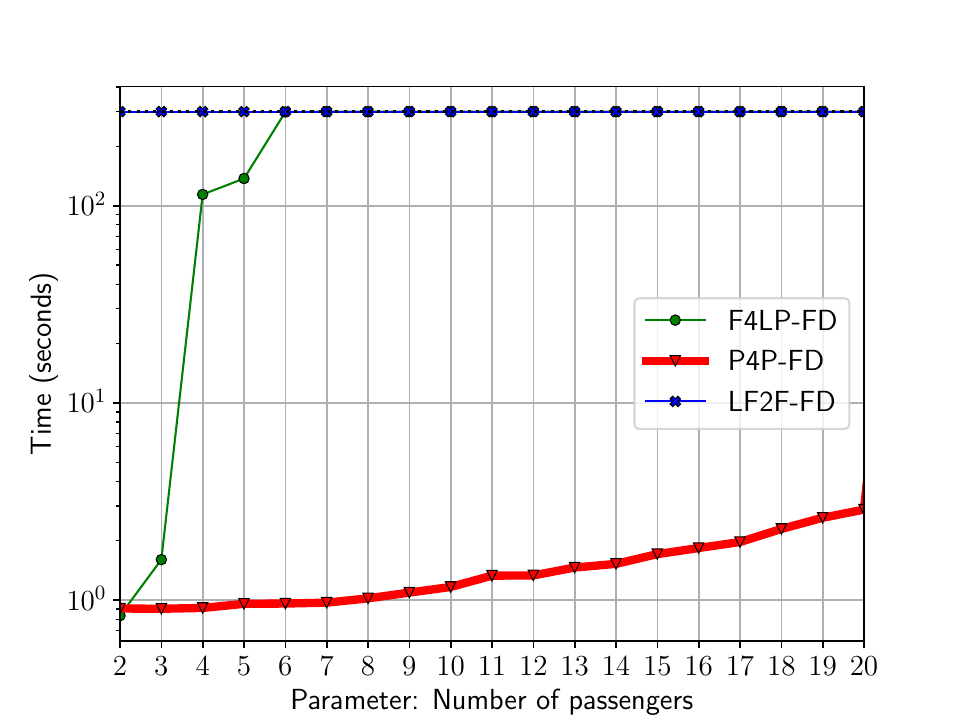}
    % \caption{\elevator, experiment \ref{exp-comparison-b}.}
    % \label{fig:elevator1b-det-20}
% \end{figure}

\paragraph{\blocks (nondeterministic)} The experimental setup is very similar to the deterministic case with the same goal description, except that the problems are taken from a different planning competition.
In Figure \ref{fig:blocksworld2-nondet}, we show the results for the experiment \ref{exp-overhead}, run over 30 problems. We note that, also in the nondeterministic case, the overhead is quite small wrt the running time of the original task.
% In Figure \ref{fig:blocksworld1a-nondet} and \ref{fig:blocksworld1b-nondet} we show, respectively, the results for experiment of type \ref{exp-comparison-a} and \ref{exp-comparison-b}.
We also run the experiments of type \ref{exp-comparison-a} and \ref{exp-comparison-b}, and we observed that our tool, backed by \mynd, scales much better than \ltlfondtofondmyndshort, and it is competitive with \fondforlpmyndshort, and sometimes better (especially in \ref{exp-comparison-a}). Due to lack of space, we do not report these results here, but they can be found in the supplementary material.

% \begin{figure}
%     \centering
%     \includegraphics[width=0.5\textwidth]{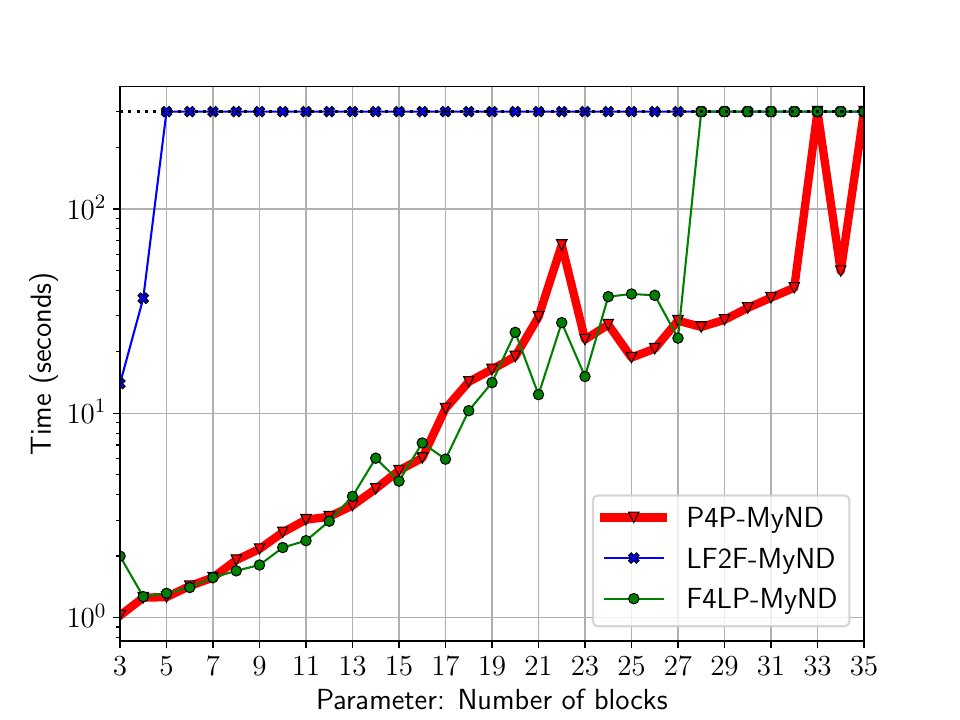}
%     \caption{\blocks (nondeterministic), experiment \ref{exp-comparison-a}.}
%     \label{fig:blocksworld1a-nondet}
% \end{figure}

% \begin{figure}
%     \centering
%     \includegraphics[width=0.5\textwidth]{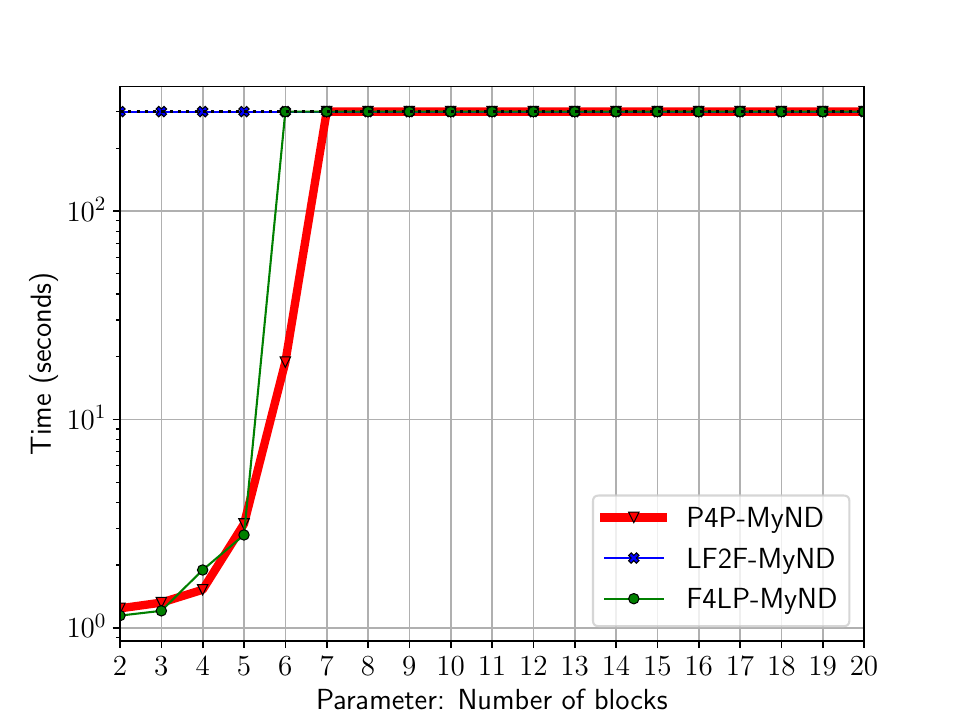}
%     \caption{\blocks (nondeterministic), experiment \ref{exp-comparison-b}.}
%     \label{fig:blocksworld1b-nondet}
% \end{figure}

% \begin{figure}
%     \centering
%     \includegraphics[width=0.5\textwidth]{images/blocksworld2-nondet.pdf}
%     \caption{\blocks (nondeterministic), experiment \ref{exp-overhead}.}
%     \label{fig:blocksworld2-nondet}
% \end{figure}

\paragraph{\tireworld} 
We run experiment \ref{exp-overhead} over the 40 problem instances from the planning competition.
The results are shown in Figure \ref{fig:tireworld2-nondet}. We can see that the running time overhead is very small also in this case.
Regarding experiments of type \ref{exp-comparison}, we consider as size of the problem the length $n$ of a side of the triangle, and a spare tire in every location. The temporal goal is to visit $m$ locations in the following order: $l11, l12, l22, l23, \dots$.
E.g. the past formula for $m=3$ is: $\varphi_m = \Once(vehicle\_at(l22) \wedge \Yesterday(\Once(vehicle\_at(l21) \wedge \Yesterday(\Once(vehicle\_at(l11))))))$.
The fixed goal for experiment \ref{exp-comparison-a} is: $\Once(vehicle\_at(l31) \wedge \Yesterday(\Once(vehicle\_at(l21) \wedge \Yesterday(\Once(vehicle\_at(l12))))))$.
These domain and goals turned out to be tough for all the approaches under study, with few solved instances.
These results can be found in the supplementary material.

% \begin{figure}
%     \centering
%     \includegraphics[width=0.5\textwidth]{images/triangletireworld2.pdf}
%     \caption{\tireworld, experiment \ref{exp-comparison}.}
%     \label{fig:triangletireworld2}
% \end{figure}

% \begin{figure}
%     \centering
%     \includegraphics[width=0.5\textwidth]{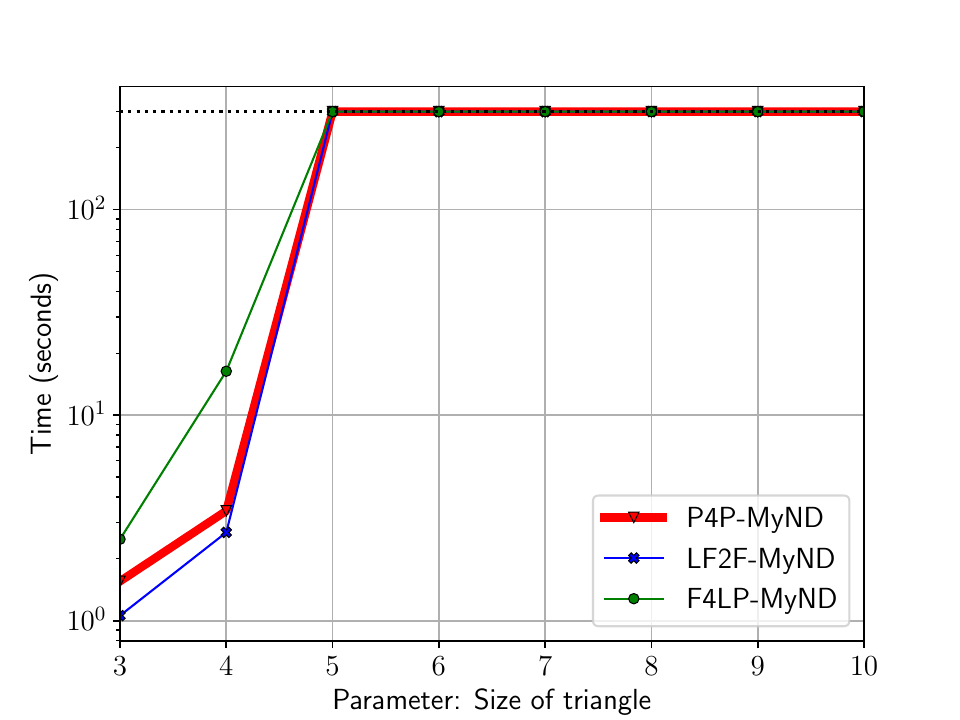}
%     \caption{\tireworld (nondeterministic), experiment \ref{exp-comparison-a}.}
%     \label{fig:triangletireworld1a}
% \end{figure}

% \begin{figure}
%     \centering
%     \includegraphics[width=0.5\textwidth]{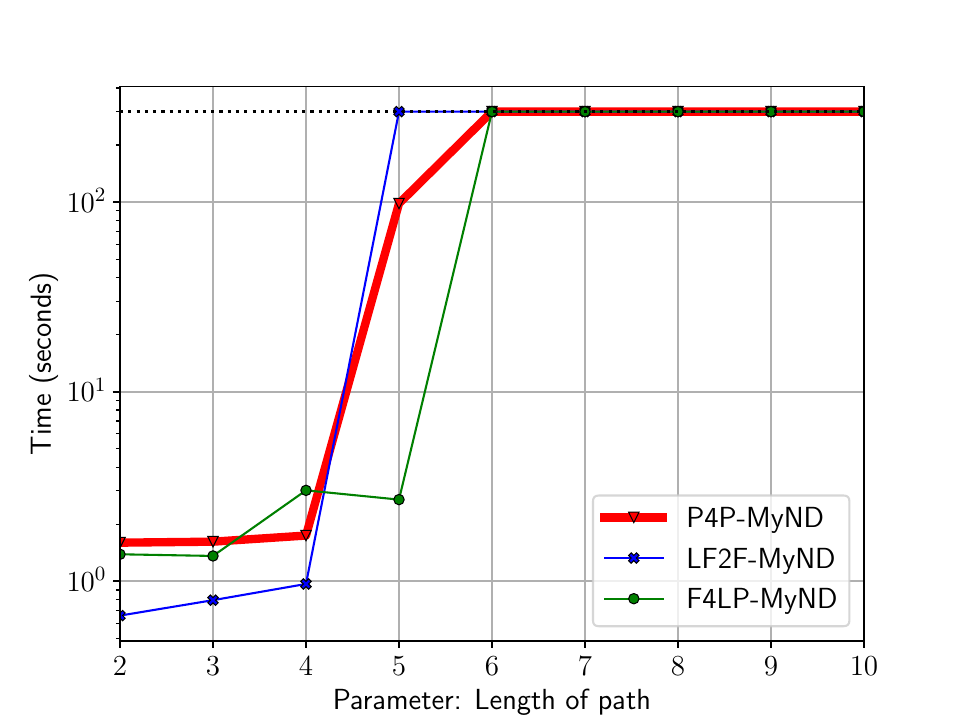}
%     \caption{\tireworld (nondeterministic), experiment \ref{exp-comparison-b}.}
%     \label{fig:triangletireworld1b}
% \end{figure}

\paragraph{Discussion}
These experiments show that in virtually all cases our tool performs significantly better than its competitors, in both the deterministic and nondeterministic case. We attribute this to the rather different nature of our approach and the competitors' approach: whilst \fondforlpshort and \ltlfondtofond compute the explicit \DFA of the goal formula (whose size is worst-case doubly-exponential for \LTLf and worst-case exponential for \PLTLf wrt the size of the formula) and then compile it into the new domain and problem, the compilation of our tool processes the formula directly generating only a minimal number of additional fluents, and uses them to  delegate the semantic evaluation of the \PLTLf goal to the planner. Another important point is that our technique does not introduce auxiliary actions, hence allowing the planner to work more efficiently in searching for a solution.
This is confirmed by the fact that in the experiments the number of expanded nodes is often the same of the original task, whereas for the other tools that is not the case.
\section{Discussion}
\label{sec:related-work}

% This section reviews previous work and compares them against what we presented in this paper. Among all related works, we identify some researches that use \PLTLf as a property to be achieved, optimized, or maintained in some form of planning (deterministic, nondeterministic, or MDP). These works all exploit some kind of translation of the \PLTLf formula to a more standard planning representation. Other papers describe the use of progression with \LTL in planning.

Our research shows that \PLTLf is a sweet spot in expressing temporally extended goals since it only introduces minimal overhead. Handling \PLTLf is particularly simple and elegant. A single compilation that works for deterministic and nondeterministic planning domains (and, in fact, it also works for MDPs, but this is out of the scope of the paper). Section~\ref{sec:progression} gives the mathematics behind it; Section~\ref{sec:pddl} gives an implementation directly based on the mathematics (for this elegance we need derived predicates); Section~\ref{sec:experiments} shows the practical effectiveness of the approach.  

These nice results hold for \PLTLf only. Indeed, (1) it is crucial to work with a \DFA in order not to introduce forms of decision that would impact planning (crucial in nondeterministic domains). (2) iterated progression can be thought of as an implicit form of \DFA construction (every formula has a single progression), meaning that the iterated progression must be able to create doubly-exponentially-many non-equivalent annotations to be complete in the case of \LTLf, whereas only exponentially-many (the truth-value of linearly many new fluents) for \PLTLf.
Even if we do not use automata explicitly, the connection with automata remains important in understanding how to deal with \LTLf/\PLTLf because it gives the essential information needed in order to be able to check if a trace satisfies an \LTLf/\PLTLf formula.

% R4P3P9: Avoiding \DFAs is not a contribution of the paper, progressions/regressions are implicitly building \DFAs anyway (though only the part that is requested by the forward search).

Specifically, \PLTLf translates into an exponential \DFA, while \LTLf translates into an exponential \NFA. If we consider a deterministic domain, with both \DFA and \NFA we can do the Cartesian product with the domain on-the-fly while searching for the solution (PSPACE). For \FOND, we first need to compute the \NFA and then transform it into a \DFA on-the-fly (2EXPTIME). Instead, \PLTLf remains EXPTIME. 
% Interestingly, exactly the same translation works for both deterministic and nondeterministic domains.

Although not always crisply stated, these observations are at the base of much of the related research.

\cite{bacchus1997structured} does something very similar to us. The similarity comes from the fact that both approaches are based on progression (i.e. regression) of \PLTLf, i.e. on the ``fixpoint equivalences'' based on ``now'' and ``next'' (``previously''), see e.g. the survey~\citep{Emerson1990TemporalAM}. In fact, we can use our Section~\ref{sec:progression} to formally justify the correctness of the approach in that paper (where correctness is not proved). Also, our translation could be used to implement their decision tree based transitions and rewards. This is something to investigate in future works related to non-Markovian Decision Processes. 

\cite{sohrabi2011preferred} uses \PLTLf for $\phi_{expl}$. However, (cf. p. 265, col. 2), they handle them in a na\"ive way, in view of recent understanding \citep{degiacomo2020pure}. They consider \PLTLf formulas as \LTLf formulas on the reverse trace (const), compute the \NFA for the \LTLf (exp), reverse it (poly), so the trace is in the correct direction, build the planning domain and solve the problem (PSPACE). The resulting technique is worst-case EXPSPACE, whereas it could be PSPACE. Moreover, they do not exploit the fact that one can obtain the \DFA of the reverse language in single exponential \citep{degiacomo2020pure} and on-the-fly while planning (this paper).

\cite{mallett2021progression} considers a probabilistic variant of \LTLf (without the past). Their construction is based on progression. 
% Notice that the progression is deterministic, hence to be complete, it needs to mimic a \DFA.
Since the progression is deterministic, it needs to mimic a \DFA to be complete. Given that the minimal \DFA corresponding to an \LTLf formula may contain doubly-exponential states in the worst-case, correspondingly the iterated progression needs to create doubly-exponential non-equivalent formulas to be complete. Additionally, here we cannot use \NFAs because they would interfere with probabilities. It would be interesting to use \PLTLf instead of \LTLf, since this would simplify their algorithmic part.

\cite{bienvenu2011specifying} introduces a very advanced language to talk about preferences based on \LTLf. The setting is on deterministic domains, so planning with this basic component remains PSPACE. Progression can be effectively used (as witnessed by TLPlan). However, as discussed above, iterated progression can explode in the worst-case, being deterministic, and hence does not take advantage of the fact that for this problem the \NFA would suffice, see \cite{camacho2017nondeterministic,camacho-strong-19}.

\cite{zhu2019first}, (cf. Sec. 4), studies \PLTLf and uses it to solve \LTLf through \MSO.
Note that if we use \FOL/\MSO to express temporal properties on finite traces, moving from properties as pure-past to pure-future (and vice versa) is poly (though checking \FOL/\MSO temporal properties is non-elementary). Unfortunately, if we use \PLTLf/\LTLf, although they have the same expressive power and the same expressive power of \FOL in expressing temporal properties on finite traces, moving from one to the other is in 3EXPTIME (matching lower bound is unknown).

We should note that bad complexity results on translations to \DFAs are not always mirrored as reduced performance of the systems. This is actually not often the case, e.g., the best \LTLf to \DFA translators are first based on the translation to \FOL (poly) and then use \FOL in MONA \citep{KlaEtAlMona} to obtain the \DFA (non-elementary). However, the simplicity and elegance of \PLTLf treatment stand out.

We close the section by observing that \PLTLf has a special role in the \LTL literature. For instance, the \cite{MannaPnueli89}'s temporal hierarchy is based on \PLTLf ``atoms'' $\alpha$ that are in the context of very simple future \LTL formulas: Safety, G $\alpha$; Co-Safety F $\alpha$; Liveness GF $\alpha$; Persistence FG $\alpha$; and so on. Also, an important result is the separation of temporal formulas with both past and future into a boolean combination of pure-future and pure-past formulas \citep{gabbay1994temporal}.
Finally, recent proposals of an \LTL fragment with both future and past operators are of interest. These new fragments, with atoms of arbitrary \PLTLf formulas within the scope of future operators, behave well on synthesis (related to planning) \citep{cimatti2020synthesis}.

% Finally, of interest are recent proposals of fragment of \LTL with both future and past that are well behaved wrt synthesis (related to planning), which are based on having atoms of arbitrary \PLTLf combined controlled use of future operators \citep{cimatti2020synthesis}.
\section{Conclusion}
\label{sec:conclusions}
In this paper, we have studied planning for temporally extended goals expressed in \PLTLf. \PLTLf is particularly interesting for expressing goals since it allows to express sophisticated tasks as in the Formal Methods literature, while the worst-case computational complexity of Planning in both deterministic and nondeterministic domains (\FOND) remains the same as for classical reachability goals.
Note that, this is not the case for planning in nondeterministic domains for \LTLf goals, even though \PLTLf and \LTLf share the same formal expressiveness~\citep{degiacomo2020pure}. 

We exploit this nice feature of \PLTLf to devise a direct technique for translating planning for \PLTLf goals into planning for standard reachability goals with only a minimal overhead. Experiments confirm the  effectiveness of such a technique in practice.  As a result, our technique enables state-of-the-art tools for classical, FOND strong and FOND strong-cyclic planning to handle \PLTLf goals seamlessly, essentially maintaining the performances they have for classical reachability goals.

In this paper, we have focused on \PLTLf. How to extend our results to goals given in \PLDLf, which is a strictly more expressive variant of \PLTLf \citep{degiacomo2020pure}, remains for further studies.

\section*{Acknowledgments}
This work has been partially supported by the ERC Advanced Grant WhiteMech (No. 834228), by the EU ICT-48 2020 project TAILOR (No. 952215), by the PRIN project RIPER (No. 20203FFYLK), and by the JPMorgan AI Faculty Research Award ``Resilience-based Generalized Planning and Strategic Reasoning''.

\bibliography{mybibfile}

\end{document}